\def\eqref#1{equation~\ref{#1}}
\def\1{\bm{1}}
\DeclareMathAlphabet{\mathsfit}{\encodingdefault}{\sfdefault}{m}{sl}
\SetMathAlphabet{\mathsfit}{bold}{\encodingdefault}{\sfdefault}{bx}{n}
\DeclareMathOperator*{\argmax}{arg\,max}
\DeclareMathOperator*{\argmin}{arg\,min}
\def\bbE{\mathbb{E}}
\def\bbR{\mathbb{R}}
\def\bbP{\mathbb{P}}
\def\bbV{\mathbb{V}}
\def\cR{\mathcal{R}}
\def\cL{\mathcal{R}}
\def\cT{\mathcal{T}}
\def\bx{\boldsymbol{x}}
\def\bt{\boldsymbol{t}}
\def\by{\boldsymbol{y}}
\def\bz{\boldsymbol{z}}
\def\bw{\boldsymbol{w}}
\def\bm{\boldsymbol{m}}
\def\bfx{\mathbf{x}}
\def\bfy{\mathbf{y}}
\def\bfz{\mathbf{z}}
\def\bft{\mathbf{t}}
\def\cB{\mathcal{B}}
\def\cD{\mathcal{D}}
\def\cF{\mathcal{F}}
\def\cG{\mathcal{G}}
\def\cH{\mathcal{H}}
\def\cL{\mathcal{L}}
\def\cN{\mathcal{N}}
\def\cR{\mathcal{R}}
\def\cS{\mathcal{S}}
\def\cX{\mathcal{X}}
\def\cY{\mathcal{Y}}
\newtheorem{ass}{Assumption}
\newtheorem{thm}{Theorem}
\newtheorem{prop}{Proposition}
\newtheorem{lemma}[thm]{Lemma}
\title{Continuous Invariance Learning}
\author{Yong Lin $^{1*}$,  Fan Zhou $^{2*}$,  Lu Tan$^{4}$,  Lintao Ma$^{2}$,  Jiameng Liu$^{1}$,  Yansu He$^{5}$, Yuan Yuan$^{6,7}$, \\  \textbf{Yu Liu}$^{2}$, \textbf{James Zhang}$^{2}$,  \textbf{Yujiu Yang}$^{4}$,  \textbf{Hao Wang} $^3$
    \\
    $^{1}$The Hong Kong University of Science and Technology,
    $^{2}$Ant Group,  $^{3}$Rutgers University, \\
    $^{4}$Tsinghua University,
    $^{5}$Chinese University of Hong Kong, 
    $^{6}$MIT CSAIL,
    $^{7}$Boston College
    % \\  
    % {\texttt{ylindf@connect.ust.hk, hanlian.zf@antgroup.com,  hogue.wang@rutgers.edu}}
}
\begin{document}

\maketitle
\def\thefootnote{\text{*}}\footnotetext{These authors contributed equally to this work. Corresponding to Lintao Ma (\textit{lintao.mlt@antgroup.com}). This work was supported by Ant Group Research Intern Program}

\begin{abstract}
Invariance learning methods aim to learn invariant features in the hope that they generalize under distributional shifts. Although many tasks are naturally characterized by continuous domains, current invariance learning techniques generally assume categorically indexed domains.  For example, auto-scaling in cloud computing often needs a CPU utilization prediction model that generalizes across different times (e.g., time of a day and date of a year), where `time' is a continuous domain index. 
In this paper, we start by theoretically showing that existing invariance learning methods can fail for continuous domain problems. Specifically, the naive solution of splitting continuous domains into discrete ones ignores the underlying relationship among domains, and therefore potentially leads to suboptimal performance. 
To address this challenge, we propose Continuous Invariance Learning (CIL), which extracts invariant features across continuously indexed domains. CIL is a novel adversarial procedure that measures and controls the conditional independence between the labels and continuous domain indices given the extracted features. Our theoretical analysis demonstrates the superiority of CIL over existing invariance learning methods. 
Empirical results on both synthetic and real-world datasets (including data collected from production systems) show that CIL consistently outperforms strong baselines among all the tasks. 
\end{abstract}

\section{Introduction}
Machine learning models have shown impressive success in many applications, including computer vision \citep{he2016deep}, natural language processing \citep{liu2019roberta}, speech recognition \citep{deng2013new}, etc. These models normally take the independent identically distributed (IID) assumption and assume that the testing samples are drawn from the same distribution as the training samples. However, these assumptions can be easily violated when there are distributional shifts in the testing datasets. This is also known as the out-of-distribution (OOD) generalization problem. 

Learning invariant features that remain stable under distributional shifts is a promising research area to address OOD problems. However, current invariance methods mandate the division of datasets into discrete categorical domains, which is inconsistent with many real-world tasks that are naturally continuous. For example, in cloud computing, Auto-scaling (a technique that dynamically adjusts computing resources to better match the varying {demands})
often needs a CPU utilization prediction model
that generalizes across different times (e.g., time of a day and date of a year), where `time' is a \emph{continuous} domain index. Figure~\ref{fig:comparision_between_discrete_continous} illustrates the problems with discrete and continuous domains. 

Consider invariant features that can be generalized under distributional shift and spurious features that are unstable. Let $\bx$ denote the input, $\by$ denote the output, $\bt$ denote the domain index, and   $\bbE^t$ denote taking expectation in domain $\bt$. 
 Consider the model composed of a featurizer $\Phi$ and a classifier $\bw$ \citep{arjovsky2019invariant}. Invariant Risk Minimization (IRM) proposes to learn $\Phi(\bx)$ {by merely using} invariant features, which yields the same conditional distribution $\bbP^{t}(\by|\Phi(\bx))$ across all domains $\bt$, i.e., $\bbP(\by|\Phi(\bx)) = \bbP(\by|\Phi(\bx), \bt)$, which is equivalent to $\by \perp \bt | \Phi(\bx)$ \citep{arjovsky2019invariant}. {Existing approximation methods propose to learn a featurizer by matching  $\bbE^t[\by|\Phi(\bx)]$ for different $\bt$ (See Section~\ref{sect:preliminary} for details). 
 
 \textbf{A case study on REx}. However, in the continuous environment setting, there are only very limited samples in each environment. So the finite sample  estimations $\hat \bbE^t[\by|\Phi(\bx)]$ can deviate significantly from the expectation $\bbE^t[\by|\Phi(\bx)]$.  In Section  \ref{sect:risk_of_continous}, We conduct a theoretical analysis of REx \citep{krueger2021rex}, a popular variant of IRM. Our analysis shows that when there is a large number of domains and limited sample sizes per domain (i.e., in the continuous domain tasks), REx fails to identify invariant features with a constant probability. This is in contrast to the results in discrete domain tasks, where REx can reliably identify invariant features given sufficient samples in each discrete domain. 

 \textbf{The generality of our results}. Other popular approximations of IRM also suffer from this limitation, such as IRMv1~\citep{arjovsky2019invariant}, REx~\citep{krueger2021rex}, IB-IRM~\citep{ahuja2021invariance}, IRMx~\citep{chen2022pareto}, IGA~\citep{koyama2020out},  BIRM~\citep{lin2022bayesian}, IRM Game~\citep{ahuja2020invariant}, Fisher~\citep{rame2022fishr} and SparseIRM~\citep{zhou2022sparse}. They employ positive losses on each environment $\bt$
 to assess whether $\hat \bbE^t[\by | \Phi(\bx)]$ matches with the other environments. However, since the estimations $\hat \bbE^t[\by | \Phi(\bx)]$ are inaccurate, these methods fail to identify invariant features. We then empirically verify the ineffectiveness of these popular invariance learning methods in handling continuous domain problems.  One potential naive solution is to split continuous domains into discrete
ones. However, this splitting scheme ignores the underlying relationship among domains and therefore can lead to sub-optimal performance, which is further empirically verified in Section \ref{sect:risk_of_continous}
\begin{wrapfigure}{l}{0.4\textwidth}
    \includegraphics[width=0.9\linewidth]{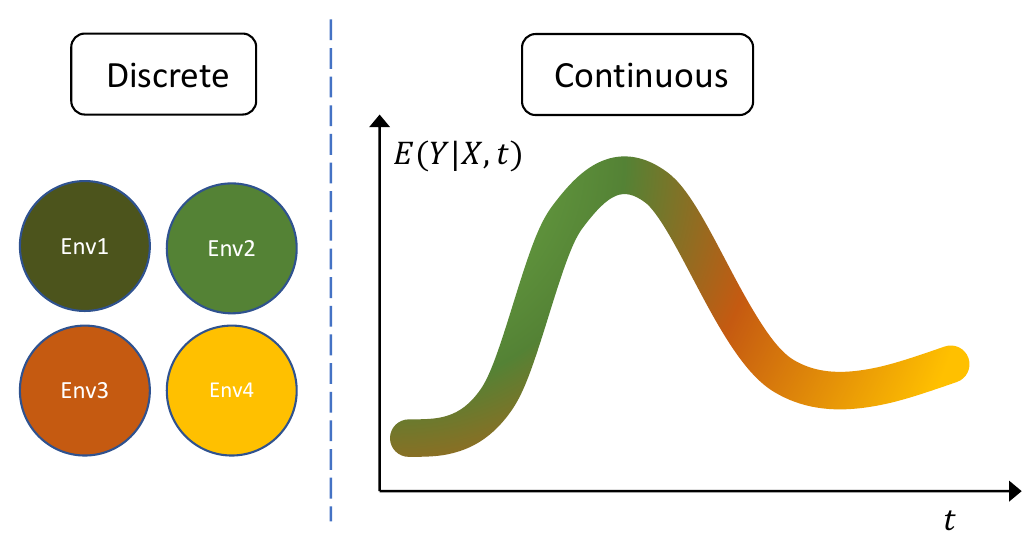}
    \caption{Illustration of distributional shifts in discrete and continuous domains~\citep{Wang2020ContinuouslyID}. Existing IRM methods focus on discrete domains, which is inconsistent with many real-world tasks. Our work therefore aims to extend IRM to continuous domains. }
    \label{fig:comparision_between_discrete_continous}
    \vspace{-5mm}
\end{wrapfigure}

\textbf{Our methods}. Recall that our task is to learn a feature extractor $\Phi(\cdot)$ that extracts invariant features from $\bx$, i.e., learn $\Phi$ which elicits $\by \perp \bt | \Phi(\bx)$.  Previous analysis shows that it is infeasible to align $\bbE^t[\by|\Phi(\bx)]$ among different domains $\bt$ in continuous domain tasks. Instead, we propose to align $\bbE^y[\bt|\Phi(\bx)]$ for each class $\by$. 
We can start by training two domain regressors, \textit{i.e.}, $h(\Phi(\bx))$ and $g(\Phi(\bx), \by)$, to regress over the continuous domain index $\bt$ using L1 or L2 loss. 
Notably, the L1/L2 distance between the predicted and ground-truth domains captures the underlying relationship between continuous domain indices and can lead to similar domain index prediction losses between domains on both $h$ and $g$ if $\Phi(\bx)$ only extracts invariant features. Specifically, the loss achieved by $h$, i.e., $\bbE\|\bt - h(\Phi(\bx))\|_2^2$, and the loss achieved by $g$, i.e., $\bbE\|\bt - g(\Phi(\bx), \by)\|_2^2$ would be the same if $\Phi(\bx)$ only extracts invariant features. The whole procedure is then formulated into a mini-max framework as shown in Section \ref{sect:cil}. 

In Section \ref{sect:theory}, we emphasize the theoretical advantages of CIL over existing IRM approximation methods in continuous domain tasks. The limited number of samples in each domain makes it challenging to obtain an accurate estimation for $\bbE^t[\by|\Phi(\bx)]$, leading to the ineffectiveness of existing methods. However, CIL does not suffer from this problem because it aims to align $\bbE^y[\bt|\Phi(\bx)]$, which can be accurately estimated due to the ample number of samples available in each class (also see Appendix~\ref{app:discuss_first_moment} for the discussion of the relationship between $\bbE^t[\by|\Phi(\bx)]$ and $\bbE^y[\bt|\Phi(\bx)]$). In Section~\ref{sect:exp}, we carry out experiments on both synthetic and real-world datasets (including data collected from production systems in Alipay and vision datasets from Wilds-time~\citep{yao2022wild}). CIL consistently outperforms existing invariance learning baselines in all the tasks. We summarize our contributions as follows:
% \begin{itemize}
\begin{compactitem}
    \item We identify the problem of learning invariant features under continuous domains and theoretically demonstrate that existing methods that treat domains as categorical indexes can fail with constant probability, which is further verified by empirical study.
    %{\color{blue}which takes into account the relationship among the continuous domains. }
    \item We propose Continuous Invariance Learning (CIL) as the first general training framework to address this problem. We also show the theoretical advantages of CIL over existing IRM approximation methods on continuous domain tasks. 
    \item We provide empirical results on both synthetic and real-world tasks, including an industrial application in Alipay and vision datasets in Wilds-time~\citep{yao2022wild}, showing that CIL consistently achieves significant improvement over state-of-the-art baselines. 
\end{compactitem}

\section{Difficulty of Existing Methods in Continuous Domain Tasks}
\subsection{Preliminaries}
\label{sect:preliminary}
% \vspace{-2mm}
\textbf{Notations.} Consider the input and output $(\bx, \by)$ in the space $\cX \times \cY$. Our task is to learn a function $f_\theta \in \cF: \cX \xrightarrow[]{} \cY$, parameterized by $\theta$. We further denote the domain index as $\bt \in \cT$. We have $(\bx, \by, \bt) \sim \bbP(\bx, \by|\bt) \times \bbP(\bt)$. Denote the training dataset as $\cD^{\text{tr}} := \{(\bfx_i^{\text{tr}}, \bfy_i^{\text{tr}}, \bft_i^{\text{tr}})\}_{i=1}^n$.  The goal of domain generalization is to learn a function $f_\theta$ on $\cD^{\text{tr}}$ that can perform well in unseen testing dataset $\cD^{\text{te}}$. 
(This is different from continuously indexed domain adaptation~\citep{Wang2020ContinuouslyID}, as our setting does not have access to target-domain data.) 
Since the testing domain $\bt^{\text{te}}$ differs from $\bt^{\text{tr}}$,  $\bbP(\bx, \by|\bt^{\text{te}})$ is also different from $\bbP(\bx, \by|\bt^{\text{tr}})$ due to distributional shift. Following \cite{arjovsky2019invariant}, we assume that $\bx$ is generated from invariant feature $\bx_{v}$ and spurious feature $\bx_{s}$ by some unknown function $\xi$, i.e., $\bx = \xi(\bx_{v}, \bx_{s})$. By the invariance property (more details shown in Appendix~\ref{sec:app_inv_property}), we have $\by \perp \bt | \bx_v$ for all $\bt \in \cT$. The target of invariance learning is to make $f_\theta$ only dependent on $\bx_{v}$.

\textbf{Invariance Learning}. To learn invariant features, Invariant Risk Minimization (IRM) first divides the neural network $\theta$ into two parts, \textit{i.e.}, the feature extractor $\Phi$ and the classifier $\bw$ \citep{arjovsky2019invariant}. The goal of invariance learning is to extract $\Phi(\bx)$ which satisfies $\by\perp \bt | \Phi(\bx)$. 

\textbf{Existing Approximation Methods.} Suppose we have collected the data from a set of discrete domains, $\cT = \{\bt_1, \bt_2, ..., \bt_M\}$.  The loss in domain $\bt$ is $\cR^t(\bw, \Phi) := \bbE^t[\ell(\bw(\Phi(\bx)), \by)]$ . Since is hard to validate $\by\perp \bt | \Phi(\bx)$ in practice, existing works propose to align $\bbE^t[\by|\Phi(\bx)]$ for all $t$ as an approximation. Specifically,  if $\Phi(\bx)$ merely extract invariant features,  $\bbE^t[\by|\Phi(\bx)]$ would be the same in all $\bt$.  Let $\bw^t$ denote the optimal classifier for domain $\bt$, i.e., $\bw^t := \argmin_{\bw}\cR^t(\bw, \Phi)$. We have $\bw^t(\Phi(\bx)) = \bbE^t[\by|\Phi(\bx)]$ hold if the function space of $\bw$ is sufficiently large \citep{li2022invariant}. So if $\Phi(\bx)$ relies on invariant features, $\bw^t$ would be the same in all $\bt$.  Existing approximation methods try to ensure $\bw^t$ that is the same for all environments to align $\bbE^t[\by|\Phi(\bx)]$ \citep{arjovsky2019invariant, lin2022bayesian, ahuja2021invariance}. Further variants also include checking whether $\cR^t(\bw, \Phi)$ is the same \citep{krueger2021rex,ahuja2020invariant, zhou2022sparse, chen2022pareto}, or the gradient is the same for all $\bt$ \citep{rame2022fishr, koyama2020out}. For example,  
REx \citep{krueger2021rex} penalizes the variance of the loss in domains:
{
\begin{align}
    \label{eqn:rex}
    \cL_{\text{REx}} = \sum\nolimits_{t \in \cT} \cR^t(\bw, \Phi) + \lambda |\cT|\text{Var}(\cR^t(\bw, \Phi)).
\end{align}
}
where $\text{Var}(\cR^t(\bw, \Phi))$ is the variance of the losses among domains. 

{\textbf{Remark}: The REx loss is conventionally $\cL_{\text{REx}} = \frac{1}{|\cT|}\sum\nolimits_{t \in \cT} \cR^t(\bw, \Phi) + \lambda \text{Var}(\cR^t(\bw, \Phi)).$ We re-scale both terms by $|\cT|$ for ease of presentation, which does not change the results. }

\subsection{The Risks of Existing Methods on Continuous Domain Tasks}
\label{sect:risk_of_continous}
Existing approximation methods propose to learn a feature extractor by matching  $\bbE^t[\by|\Phi(\bx)]$ for different $\bt$. However, in the continuous environment setting, there are a lot of domains and each domain contains limited {amount of samples}, leading to noisy empirical estimations $\hat \bbE^t[\by|\Phi(\bx)]$.  Specifically, existing methods employ positive losses on each environment $\bt$
 to assess whether $\hat \bbE^t[\by | \Phi(\bx)]$ matches the other environments. However, since the estimated $\hat \bbE^t[\by | \Phi(\bx)]$ can deviate significantly from $\bbE^t[\by|\Phi(\bx)]$, these methods fail to identify invariant features.   In this part,  we first use REx as an example to theoretically illustrate this issue and then show experimental results for other variants. 

\textbf{Theoretical analysis of REx on continuous domain tasks}. Consider a simplified case where $\bx$ is a concatenation of a spurious feature $\bx_s$ and an invariant feature $\bx_v$, i.e., $\bx=[x_s, x_v]$. 
Further, $\Phi$ is a feature mask, i.e., $\Phi \in \{0, 1\}^2$. Let $\Phi_s$ and  $\Phi_v$ denote the feature mask that {merely} select spurious and invariant feature, i.e., $\Phi_s=[1,0]$ and $\Phi_v=[0,1]${,} 
respectively. Let $\hat \cR^t(\bw, \Phi) = \frac{1}{n^t} \sum_{i=1}^{n_e} \ell(\bw(\Phi(\bx_i)), y_i)$ denote the finite sample loss on domain $t$ where $n^t$ is the sample size in domain $t$.  Denote $\hat \cL_{\text{REx}}(\Phi)$ as the finite-sample REx loss in Eq. \eqref{eqn:rex}. 
We omit the subscript `REx' when it is clear from the context in this subsection. REx can identify the invariant feature mask $\Phi_v$ if and only if 
$\hat \cL(\Phi_v) < \hat \cL(\Phi), \forall \Phi \in \{0, 1\}^2.$ Suppose the dataset $\cS$ that contains $|\cT|$ domains. For simplicity, we assume that there are equally $n^t = {n}/{|\cT|}$ samples in each domain. 
\begin{ass}
% Assume the following: 
The expected loss of the model using spurious features in each domain, $\cR^t(\bw, \Phi_s)$,  follows a Gaussian distribution, i.e.,  $\cR^t(\bw, \Phi_s) \sim \cN(\cR(\bw, \Phi_s), \delta_R)$, where the mean $\cR(\bw, \Phi_s)$ is the average loss over all domains. Given a feature mask $\Phi \in \{\Phi_s, \Phi_v\}$, the loss of an individual sample $(\bx, y)$ from environment $t$ deviates from the expectation loss (of domain $t$) by a Gaussian, $\ell(y, \bw(\Phi(\bx))) - \bbE^t \left[\ell(y, \bw(\Phi(\bx)))\right] \sim \cN(0, \sigma^2_\Phi)$. 
The variance $\sigma_\Phi$ is drawn from a hyper exponential  distribution with density function $p(\sigma_\Phi; \lambda) = \lambda \exp(-\lambda \sigma_\Phi)$ for each $\Phi$. 
\end{ass}

\textbf{Remark on the Setting and Assumption.} 
{  Recall that $\cR^t(\bw, \Phi)$ represents the expected loss of $(\bw, \Phi)$ in domain $t$. When $\Phi=\Phi_v$, the loss is the same across all domains, resulting in zero penalty. However, when $\Phi=\Phi_s$, the loss in domain $t$ deviates from the average loss, introducing a penalty that increases linearly with the number of environments. In this case, REx can easily identify the invariant feature if we have an infinite number of samples in each domain. However, if we have multiple environments with limited sample sizes in each, REx can fail with a constant probability. This limitation will be discussed further in the following parts. }

Let $\cG^{-1}: [0, 1] \xrightarrow[]{} [0, \infty)$ denote the inverse of the cumulative density function:  $\cG(t) = P(z \leq t)$ of the distribution whose density is $p(z) = 1 - 1/2 \exp(-\lambda z) $ for $z>0$. 
Proposition~\ref{prop:continous_env_risk} below shows that REx fails when there are many domains but with a limited number of samples in each domain. 
% the samples in each domain are limited. 

\begin{prop}
\label{prop:continous_env_risk}
If $\frac{n}{|\cT|} \xrightarrow[]{} \infty$, with probability approaching 1, we have
$\bbE[\hat \cL(\Phi_v)] < \bbE[\hat \cL(\Phi_s)]$, 
where the expectation is taken over the random draw of the domain and each sample given $\sigma_\Phi$. However, if the domain number $|\cT|$ is comparable with $n$, REx can fail with a constant probability. For example, if
$|\cT| \geq \frac{\sigma_R\sqrt{n}}{\Delta\cG^{-1}(1/4)}$, then   
with probability at least $1/4$, 
$\bbE[\hat \cL(\Phi_v)] > \bbE[\hat \cL(\Phi_s)].
$
\end{prop}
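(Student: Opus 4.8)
The plan is to reduce both $\bbE[\hat{\cL}(\Phi_v)]$ and $\bbE[\hat{\cL}(\Phi_s)]$ to closed forms in $|\cT|$, $n$, $\sigma_{\Phi_v}$, $\sigma_{\Phi_s}$, $\delta_R$ and the two population risks, and then read off the sign of the gap in each regime. I would start from the split $\hat{\cL}(\Phi)=\sum_t\hat{\cR}^t(\bw,\Phi)+\lambda\,\text{Var}_t\!\big(\hat{\cR}^t(\bw,\Phi)\big)$ and use the Assumption to describe the per-domain empirical losses as independent Gaussians across $t$. For $\Phi_v$ the population loss is the same in every domain, so the only cross-domain variation is the Monte Carlo error over the $n^t=n/|\cT|$ samples: $\hat{\cR}^t(\bw,\Phi_v)\sim\cN\!\big(\cR(\bw,\Phi_v),\,\sigma_{\Phi_v}^2|\cT|/n\big)$ i.i.d. For $\Phi_s$ the population loss is itself $\cN(\cR(\bw,\Phi_s),\delta_R)$, so $\hat{\cR}^t(\bw,\Phi_s)\sim\cN\!\big(\cR(\bw,\Phi_s),\,\delta_R+\sigma_{\Phi_s}^2|\cT|/n\big)$ i.i.d., the cross-domain variance now being genuine heterogeneity $\delta_R$ plus a sampling term.

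Next I would take the expectation in the statement --- over the domain draw and the samples, conditional on $\sigma_{\Phi_v},\sigma_{\Phi_s}$ --- using that the expected empirical mean of i.i.d.\ variables equals their mean and the expected empirical variance of $|\cT|$ i.i.d.\ variables equals $\tfrac{|\cT|-1}{|\cT|}$ times their variance. Up to that bias factor this gives
\[
\bbE[\hat{\cL}(\Phi_v)]\;\approx\;|\cT|\,\cR(\bw,\Phi_v)+\frac{\lambda\,\sigma_{\Phi_v}^2|\cT|}{n},\qquad
\bbE[\hat{\cL}(\Phi_s)]\;\approx\;|\cT|\,\cR(\bw,\Phi_s)+\lambda\delta_R+\frac{\lambda\,\sigma_{\Phi_s}^2|\cT|}{n},
\]
so the sign of $\bbE[\hat{\cL}(\Phi_v)]-\bbE[\hat{\cL}(\Phi_s)]$ is decided by the competition between the population-risk difference $|\cT|\big(\cR(\bw,\Phi_v)-\cR(\bw,\Phi_s)\big)$ and the penalty difference $\lambda\big((\sigma_{\Phi_v}^2-\sigma_{\Phi_s}^2)|\cT|/n-\delta_R\big)$; here $\Delta$ denotes the constant in the threshold, which bundles the population-risk part (in the paper's minimal setup this part does not by itself reverse the decision).

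For the first claim, when $n/|\cT|\to\infty$ the sampling term $\sigma_{\Phi_v}^2|\cT|/n\to 0$ for every fixed $\sigma_{\Phi_v}$, while the $\Phi_s$ side keeps the floor $\lambda\delta_R>0$; hence the penalty difference tends to $-\lambda\delta_R<0$ and governs the gap, giving $\bbE[\hat{\cL}(\Phi_v)]<\bbE[\hat{\cL}(\Phi_s)]$. This conclusion is only spoiled when $\sigma_{\Phi_v}$ is atypically large, of order $\sqrt{n/|\cT|}$ or more (a large $\sigma_{\Phi_s}$ only helps REx), and by the exponential density $p(\sigma_{\Phi_v};\lambda)=\lambda e^{-\lambda\sigma_{\Phi_v}}$ that has probability $\exp\!\big(-\Theta(\sqrt{n/|\cT|})\big)\to 0$, which is exactly the ``probability approaching $1$'' statement.

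For the failure claim I would instead isolate a constant-probability event, over the hyperexponential draw of $\sigma_{\Phi_v},\sigma_{\Phi_s}$, on which the $\Phi_v$ sampling penalty alone overwhelms the $\delta_R$ floor. Rearranging the gap, REx picks the wrong mask whenever $(\sigma_{\Phi_v}^2-\sigma_{\Phi_s}^2)|\cT|/n$ exceeds $\delta_R$ up to the lower-order correction carried by $\Delta$, i.e.\ whenever the difference of the two variance parameters drawn for $\Phi_v$ and $\Phi_s$ exceeds a deterministic margin of order $\delta_R n/|\cT|$. Since those parameters are i.i.d.\ $\lambda$-exponentials, that difference has precisely the law whose CDF is the $\cG$ of the statement, so the probability of the bad event is $1-\cG(\text{margin})$; choosing $|\cT|\ge\sigma_R\sqrt n/\big(\Delta\,\cG^{-1}(1/4)\big)$ shrinks the margin enough to push this probability to at least $1/4$, and on that event the gap is positive, i.e.\ $\bbE[\hat{\cL}(\Phi_v)]>\bbE[\hat{\cL}(\Phi_s)]$. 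I expect the bookkeeping of this last step to be the main obstacle: keeping the two layers of randomness cleanly separated (the hyperexponential prior on $\sigma_\Phi$ versus the conditional domain-plus-sample Gaussians), carrying the $\tfrac{|\cT|-1}{|\cT|}$ factor and the sign of $\Delta$ through the inequality uniformly in $|\cT|$, and tracking the exponential tail through $\cG^{-1}$ precisely enough that the failure probability lands at exactly $1/4$.
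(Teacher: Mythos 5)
Your proposal follows essentially the same route as the paper's proof: expand $\hat \cL(\Phi)$ into the risk sum plus the variance penalty, take its conditional expectation given $\sigma_{\Phi_v},\sigma_{\Phi_s}$ so that the spurious mask carries a genuine cross-domain heterogeneity term while both masks carry a sampling-noise inflation growing in $|\cT|$ relative to it (the paper's term-by-term $A_0$--$A_6$ expansion is just an unrolled version of your i.i.d.-Gaussian packaging of $\hat\cR^t$), and then read the constant failure probability off the Laplace law of the difference of the two exponential hyper-draws via $\cG^{-1}$. The one bookkeeping point to reconcile is that you state the failure event for $\sigma_{\Phi_v}^2-\sigma_{\Phi_s}^2$, whose law is \emph{not} the Laplace CDF $\cG$; the paper's step lands on $\cG$ because it has the exponential draws enter the expected penalty linearly, so the event is $\sigma_{\Phi_v}-\sigma_{\Phi_s}\geq$ threshold and $P(\sigma_{\Phi_v}-\sigma_{\Phi_s}\leq z)=1-\tfrac12\exp(-\lambda z)$ applies directly.
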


\textbf{Proof Sketch and Main Motivation.} The complete proof is included in Appendix \ref{app:proof_of_prop}. When there are only a few samples in each domain (i.e., $|\cT|$ is comparable to $n$), the empirical loss $\hat \cR^t(\bw, \Phi)$ deviates from its expectation by a Gaussian variable $\epsilon_t~\sim~\cN(0, \sigma_\Phi^2/n_t)$. After taking square, we have $\bbE[\epsilon_t^2]~ =~\sigma_\Phi^2/n_t$. There are $|\cT|$ domains and $n_t~=~n/|\cT|$. So we have $\sum_t \bbE[\epsilon_t^2] = |\cT|^2 \sigma_\Phi^2/n$, indicating that the data randomness can induce an estimation error that is quadratic in $|\cT|$. Note that the expected penalty (assuming we have infinite samples in each domain) grows linearly in $|\cT|$. Therefore the estimation error dominates the empirical loss $\hat \cL$ with large $|\cT|$, which means the algorithm selects features based on the data noise rather than the invariance property. Intuitively, existing IRM methods try to find a feature $\Phi$ which aligns $\hat \bbE^t[Y|\Phi(\bx)]$ among different $t$. Due to the finite-sample estimation error, $\hat \bbE^t[Y|\Phi(\bx)]$ can be far away from $\bbE^t[Y|\Phi(\bx)]$, leading to the failure of invariance learning. One can also show that other variants, e.g., IRMv1, also suffer from this issue (see empirical verification later in this section). 
% This is verified by our empirical results later in this section.
% \begin{wrapfigure}{l}{0.5\textwidth}
%     \includegraphics[width=0.9\linewidth]{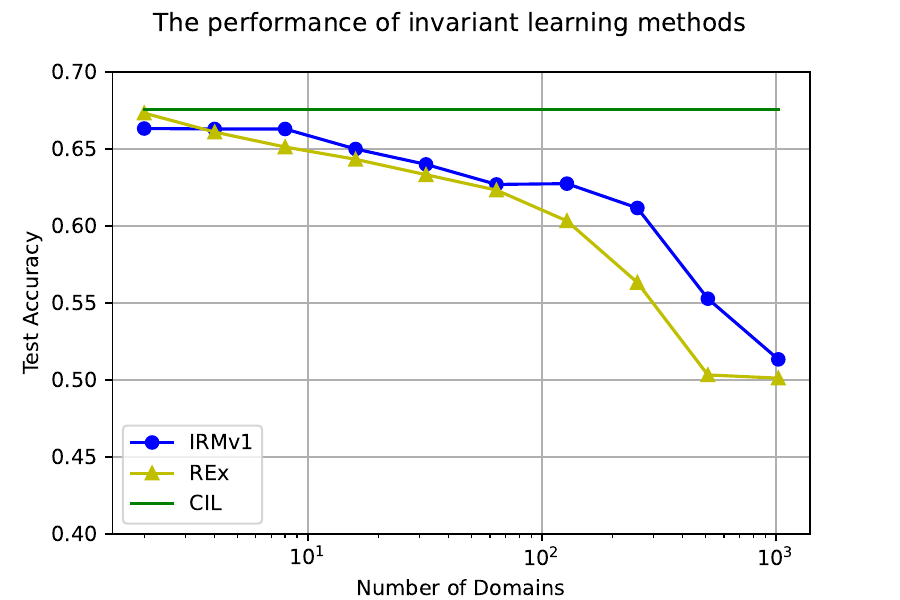}
%     \caption{Empirical validation of how the performance of IRM deteriorates with the number of domains while the total sample size is fixed. The experiments are conducted on CMNIST \citep{arjovsky2019invariant} with 50,000 samples. We equally split the original 2 domains into more domains. Since CMNIST only contains 2 classes, 50\% test accuracy is close to random guessing. Notably, the data with continuous domains can contain an infinite number of domains with only one sample in each domain. }
%     \label{fig:IRM_envs}
%     \vspace{-14mm}
% \end{wrapfigure}

\begin{wrapfigure}{l}{0.6\textwidth}
 \centering  \includegraphics[width=\linewidth]{}
    \caption{Empirical validation of how the performance of IRM deteriorates with the number of domains while the total sample size is fixed. The experiments are conducted on CMNIST \citep{arjovsky2019invariant} with 50,000 samples. We equally split the original 2 domains into more domains. Since CMNIST only contains 2 classes, 50\% test accuracy is close to random guessing. Notably, the data with continuous domains can contain an infinite number of domains with only one sample in each domain. }
    \label{fig:IRM_envs}
    \vspace{-22mm}
\end{wrapfigure}

\textbf{Implications for Continuous Domains.} Proposition \ref{prop:continous_env_risk} shows that if we have a large number of domains and each domain contains limited data, REx can fail to identify the invariant feature. Specifically, as long as $|\cT|$ is larger than $O(\sqrt{n})$, REx can fail even if each domain contains $O(\sqrt{n})$ samples. 

When we have a continuous domain in real-world applications, each sample $(\bx_i, y_i)$ has a distinct domain index, i.e., there is only one sample in each domain. Therefore when $|\cT|\approx n$, REx can fail with probability 1/4 when $n \geq \left(\frac{\sigma_R}{\delta \cG^{-1}(1/4)}\right)^2$.

\textbf{Empirical verification}. We conduct experiments on CMNIST \citep{arjovsky2019invariant}  to corroborate our theoretical analysis. The original CMNIST contains 50,000 samples with 2 domains. 
We keep the total sample size fixed and 
split the 2 domains into 4, 8, 16,..., and 1024 domains (more details in Appendix~\ref{app:add_exp_cmnist}).  The results in Figure \ref{fig:IRM_envs} show that the testing accuracy of REx and IRMv1 decreases as the number of domains increases. Their accuracy eventually drops to 50\% (close to random guessing) when the number of domain numbers is 1,024.   These results show that existing invariance methods can struggle when there are many domains with limited samples in each domain. Refer Table~\ref{tab:more_baselines_in1024CMNIST} for results on more existing methods.

\begin{table}[H]
    \centering
    \resizebox{\linewidth}{!}{
    \begin{tabular}{c|c|c|c|c|c|c|c|c|c|c}
    \hline
        Method & IRMv1 & REx &{IB-IRM}& {IRMx} &{IGA} &{InvRat-EC} &{BIRM}& {IRM Game} & {SIRM} & {CIL (Ours)}  \\
        \hline
       Acc(\%)&50.4 & 51.7 & 55.3 & 52.4 & 48.7& 57.3&47.2& 58.3 & 52.2 & \textbf{67.2}  \\
         \hline
    \end{tabular}
    }
    \caption{OOD performance of existing methods on continuous CMNIST with 1024 domains. }
    \label{tab:more_baselines_in1024CMNIST}
\end{table}

\textbf{Further discussion on merging domains}.  A potential method to improve existing invariance learning methods in continuous domain problems is to merge the samples with similar domain indices into a {`larger'} domain. However, since we may have no prior knowledge of the latent heterogeneity of the spurious feature, merging domains is very hard in practice (also see Section \ref{sect:exp} for empirical results on {merged} domains).

% To simplify the discussion, assume that we have a good classifier for each environment, i.e., $\bw^t(\Phi(\bx)) = \bbE^t[y|\Phi(\bx)]$ and $\bw(\Phi(\bx)) = \bbE[y|\Phi(\bx)]$. 
% \begin{align*}
%     & \hat \cR^t(\bw, \Phi) - \hat \cR(\bw, \Phi)\\
%     = & - \left(\cR^t(\bw, \Phi) - \hat \cR^t(\bw, \Phi)\right)  + \left(\cR(\bw, \Phi) - \hat \cR(\bw, \Phi) \right) \\
%     & + \left(\cR^t(\bw, \Phi) - \cR(\bw, \Phi)\right) 
% \end{align*}
% Suppose $\ell(y, \bw(\Phi(\bx))) = \bbE \left[\ell(y, \bw(\Phi(\bx)))\right] + \epsilon$, where $\epsilon \sim \cN(0, \sigma^2)$.
% Assume $\cL(\Phi_s(\bx)) > \cL(\Phi_v(\bx))$, that is, if we have infinite samples in each environment, REx can identify the invariant feature. If there are $n$ environments and each environment only contains one sample, then 
% \begin{align*}
%     \hat \cL(\Phi_s(\bx)) - \hat \cL(\Phi_v(\bx)) 
% \end{align*}

\section{Our Method}% (CIRM)
\label{sect:cil}
% In order to learn invariant features among the continuous environments, we propose the Continuous Invariant Risk Minimization (CIRM). 
We propose Continuous Invariance Learning (CIL) as a general training framework for learning invariant features among continuous domains. 

\textbf{Formulation.} 
Suppose $\Phi(\bx)$ successfully extracts invariant features, and we have $\by \perp \bt | \Phi(\bx)$ according to \cite{arjovsky2019invariant}. The previous analysis shows that it is difficult to align $\bbE^{t}[\by|\Phi(\bx)]$ for each domain $\bt$ since there are only very limited samples in $\bt$ in the continuous domain tasks.  This also explains why most existing methods fail in continuous domain tasks. In this part, we propose to align $\bbE^{y}[\bt|\Phi(\bx)]$ for each class $\by$ (see Appendix~\ref{app:discuss_first_moment} for the discussion on the relationship between $\bbE^{t}[\by|\Phi(\bx)]$ and $\bbE^{y}[\bt|\Phi(\bx)]$). Since there is a sufficient number of samples in each class $\by$, we can obtain an accurate estimation of $\bbE^{y}[\bt|\Phi(\bx)]$. We perform the following steps to verify whether $\bbE^{y}[\bt|\Phi(\bx)]$ is the same for each $\by$ : we first fit a function $h \in \cH$ to predict $\bt$ based on $\Phi(\bx)$. Since $\bt$ is continuous, we adopt L2 loss to measure the distance between $\bt$ and $h(\Phi(\bx))$, i.e., $\bbE[\|h(\Phi(\bx)) - \bt \|_2^2]$. We use another function $g$ to predict $\bt$ based on $\Phi(\bx)$ and $\by$, and minimize $\bbE\|g(\Phi(\bx), \by) - \bt \|_2^2$. If $\by \perp \bt | \Phi(\bx)$ holds, $\by$ does not bring more information to predict $\bt$ when conditioned on $\Phi(\bx)$. Then the loss achieved by $g(\Phi(\bx), \by)$ would be similar to the loss achieved by $h(\Phi(\bx))$, i.e.,  $\bbE[\|g(\Phi(\bx), \by) - \bt \|_2^2] = \bbE[\|h(\Phi(\bx)) - \bt \|_2^2]$.

In conclusion, we  solve the following framework:
\begin{equation}
    \label{eqn:full_constraint_version}
    \small
    \min_{\Phi, \bw}~ \bbE_{\bx, \by} \ell(\bw(\Phi(\bx)), \by) \quad
    \text{s.t.}
    \min_{h \in \cH} \max_{g \in \cG} \bbE [\| h(\Phi(\bx)) - \bt\|_2^2 - \| g(\Phi(\bx), \by) - \bt\|_2^2] \nonumber = 0 \nonumber
\end{equation}
In practice, we can replace the hard constraint in \eqref{eqn:full_constraint_version} with a soft regularization term as follows:
{
\begin{equation}
    \small
    \label{eqn:full_version}
    \min_{\bw, \Phi, h} \max_{g}~ Q(\bw, \Phi, g, h) = \bbE_{\bx, \by, \bt} \Big [ \ell(\bw(\Phi(\bx)), \by) +  \lambda \big(\| h(\Phi(\bx)) -\bt\|_2^2 - \| g(\Phi(\bx), \by)- \bt\|_2^2 \big)
     \Big], \nonumber
\end{equation}
where $\lambda \in \bbR^+$ is the penalty weight.
}
% \vspace{-4mm}

\textbf{Algorithm.} We adapt Stochastic Gradient Ascent and Descent (SGDA) to solve \eqref{eqn:full_version}. SGDA alternates between inner maximization and outer minimization by performing one step of gradient each time. The full algorithm is shown in Appendix A. 

{\textbf{Remark}: Our method is indeed based on the assumption that the class variable is discrete. Invariance learning methods have primarily been applied to classification tasks where the class variable is typically discrete, which aligns with our requirement. It is worth noting that previous methods often assume that the domains are discrete, which may not be applicable in many applications where the domains are continuous. }

% \begin{algorithm}[h]
%  \caption{CIL: Continuous Invariant Risk Minimization}
%  \begin{algorithmic}[1]
%  \label{algo:CIL}
%  \renewcommand{\algorithmicrequire}{\textbf{Input:}}
%  \renewcommand{\algorithmicensure}{\textbf{Output:}}
%  \REQUIRE Feature extractor $\Phi$, label classifier $\bw$, domain index regressor $h$ and $g$; The training dataset $\cD^{\text{tr}} = \{(\bfx_i, \bfy_i, \bft_i)_{i=1}^n\}$.
%  \ENSURE  The learned $\Phi$, classifier $\bw$, the domain regressor $h$ and $g$.
%  \\ \STATE Initialize $\Phi$, $\bw$, $h$ and $g$. 
% %  \\ \textit{LOOP Process}
%   \WHILE{Not Converge} 
%     \STATE Sample a batch  $\cB$ from $\cD^{\text{tr}}$.
%     \STATE Obtain the loss for $g$ on the batch $\cB$, \\
%     $\cL_\cB(\Phi, g)~=~ 1/|\cB|\sum_{(\bfx, \bfy, \bft) \in \cB} \|g(\Phi(\bfx), \bfy), \bft \|_2^2.$
%     \STATE Perform one step of gradient ascent on $\cL_\cB(\Phi, g)$ w.r.t.~$g$, i.e., $g \xleftarrow{} g+\eta \nabla_g \cL_\cB(\Phi, g)$
%     \STATE Obtain the loss for $(\Phi, \bw, h)$ on $\cB$, $\cL_{\cB}(\bw, \Phi, h) = 1/|\cB| \sum_{\bfx, \bfy, \bft \in \cB}\left( \ell(\bw(\Phi(\bfx)), \bfy) + \lambda \| h(\Phi(\bfx)), \bft\|_2^2\right)$
%     \STATE Perform one step of gradient descent on $\cL_\cB(\bw, \Phi, h)$ w.r.t.~$(\bw, \Phi, h)$, i.e., \\
%     $(\bw, \Phi, h) \xleftarrow{} (\bw, \Phi, h) +\eta \nabla_{(\bw, \Phi, h)} \cL_\cB(\bw, \Phi, h)$
    
%   \ENDWHILE
%  \RETURN $(\bw, \Phi, h, g)$
%  \end{algorithmic}
%  \end{algorithm}

\subsection{Theoretical Analysis of Continuous Invariance Learning}
\label{sect:theory}
In this section, we assume $\bt$ is a one-dimensional scalar to ease the notation for clarity.  All results still hold when $\bt$ is a vector.  We start by stating some assumptions on the capacity of the function classes.
\begin{ass}
\label{ass:h_space}
$\cH$ contains $h^*$, where $h^*(\bz) := \bbE [\bt|\bz=\Phi(\bx)]$. 
\end{ass}
\begin{ass}
\label{ass:g_space}
$\cG$ contains $g^*$, where $g^*(\bz, \by) := \bbE [\bt|\bz=\Phi(\bx), \by)]$. 
\end{ass}
Then we have the following results:
\begin{lemma}
\label{lemma:first_lemma}
Suppose Assumption \ref{ass:h_space} and  \ref{ass:g_space} hold,  $h^*$ and $g^*$ minimize the following losses given a fixed $\Phi$, we have  $h^*(\cdot) = \argmin_{h \in \cH} \bbE_{(\bx, t)} (h(\Phi(\bx))- \bt)^2,$ and $
    g^*(\cdot) = \argmin_{g \in \cG} \bbE_{(\bx, \by, t)} (g(\Phi(\bx), \by)- \bt)^2. $
where the proof is shown in Appendix \ref{sec:app_proof_lemma1}.
\end{lemma}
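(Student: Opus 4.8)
\textbf{Proof plan for Lemma~\ref{lemma:first_lemma}.}
The statement is the classical fact that the conditional expectation is the $L^2$-projection, instantiated twice: once with $h$ regressing $\bt$ on the single input $\Phi(\bx)$, and once with $g$ regressing $\bt$ on the pair $(\Phi(\bx), \by)$. The plan is to prove the first identity in detail and then observe the second follows by the identical argument after replacing the conditioning variable $\Phi(\bx)$ with the pair $(\Phi(\bx),\by)$. Fix $\Phi$ throughout, and write $\bz = \Phi(\bx)$ for brevity.

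First I would expand the objective for an arbitrary $h \in \cH$ by inserting and subtracting $h^*(\bz) = \bbE[\bt\mid \bz]$:
\begin{equation}
\bbE_{(\bx,t)}\big(h(\bz)-\bt\big)^2 = \bbE\big(\bt - h^*(\bz)\big)^2 + 2\,\bbE\big[(\bt - h^*(\bz))(h^*(\bz)-h(\bz))\big] + \bbE\big(h^*(\bz)-h(\bz)\big)^2. \nonumber
\end{equation}
The middle cross term vanishes: conditioning on $\bz$, the factor $h^*(\bz)-h(\bz)$ is measurable with respect to $\bz$, so by the tower rule $\bbE[(\bt-h^*(\bz))(h^*(\bz)-h(\bz))] = \bbE\big[(h^*(\bz)-h(\bz))\,\bbE[\bt - h^*(\bz)\mid \bz]\big] = 0$, since $\bbE[\bt\mid\bz]=h^*(\bz)$ by definition. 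Hence the objective equals $\bbE(\bt-h^*(\bz))^2 + \bbE(h^*(\bz)-h(\bz))^2$, whose first term is independent of $h$ and whose second term is nonnegative and minimized (at value $0$) by $h=h^*$. By Assumption~\ref{ass:h_space}, $h^* \in \cH$, so the minimizer is attained within the feasible class, giving $h^*(\cdot) = \argmin_{h\in\cH}\bbE_{(\bx,t)}(h(\Phi(\bx))-\bt)^2$.

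For the second claim I would repeat the same decomposition verbatim with $g(\bz,\by)$ in place of $h(\bz)$ and $g^*(\bz,\by)=\bbE[\bt\mid\bz,\by]$ in place of $h^*(\bz)$; the cross term again vanishes because $g^*(\bz,\by)-g(\bz,\by)$ is measurable with respect to $(\bz,\by)$ and $\bbE[\bt - g^*(\bz,\by)\mid \bz,\by]=0$, and Assumption~\ref{ass:g_space} guarantees $g^*\in\cG$. Strictly speaking one should note that the minimizer is unique only up to sets of measure zero (any $h$ agreeing with $h^*$ almost surely is also optimal), but since the lemma only asserts that $h^*$ and $g^*$ are minimizers this is not an obstacle. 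Indeed there is no real obstacle here at all — the only thing to be careful about is the measurability/tower-rule justification of the cross-term cancellation, and the use of the capacity Assumptions~\ref{ass:h_space}--\ref{ass:g_space} to ensure the population-optimal regressors actually lie in the hypothesis classes $\cH$ and $\cG$ over which we are optimizing.
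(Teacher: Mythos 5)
Your proof is correct, but it takes a different (equally standard) route from the paper's. You use the Pythagorean/orthogonality decomposition: expand $\bbE(h(\bz)-\bt)^2$ around $h^*(\bz)=\bbE[\bt\mid\bz]$, kill the cross term via the tower rule, and conclude that the excess risk is exactly $\bbE\bigl(h^*(\bz)-h(\bz)\bigr)^2\ge 0$. The paper instead conditions on $\bz=\Phi(\bx)$ first and, for each fixed value of $\bz$, minimizes the resulting scalar quadratic $h(\bz)^2-2h(\bz)\,\bbE[\bt\mid\bz]+\bbE[\bt^2\mid\bz]$ pointwise, reading off the minimizer $h(\bz)=\bbE[\bt\mid\bz]$ by completing the square. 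Both arguments invoke the capacity Assumptions~\ref{ass:h_space}--\ref{ass:g_space} to place $h^*,g^*$ inside $\cH,\cG$, and both handle $g^*$ by the verbatim substitution $(\bz)\mapsto(\bz,\by)$. Your version makes the a.s.-uniqueness of the minimizer and the size of the excess risk for a suboptimal $h$ explicit; the paper's version directly produces the value of the minimum, $\bbE_{\Phi(\bx)}[\bbV[\bt\mid\Phi(\bx)]]$ (resp.\ $\bbE_{\Phi(\bx),\by}[\bbV[\bt\mid\Phi(\bx),\by]]$), which is what it actually reuses in the proof of Theorem~\ref{thm:infinite_samples}. Your first term $\bbE(\bt-h^*(\bz))^2$ is of course the same quantity, so nothing is lost --- if you intend your lemma to feed into the downstream variance-gap argument, it would be worth stating that identification explicitly.
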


\begin{thm} 
\label{thm:infinite_samples}
Suppose Assumptions
\ref{ass:h_space} and \ref{ass:g_space} hold. The constraint of \eqref{eqn:full_constraint_version} is satisfied if and only $
    \bbE[\bt|\Phi(\bx)] = \bbE[\bt|\Phi(\bx), \by]$ holds for $\forall \by \in \cY$.
\end{thm}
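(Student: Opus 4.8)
The plan is to prove the two directions separately, using Lemma~\ref{lemma:first_lemma} to convert the minimization over $h$ and $g$ into statements about the conditional-expectation functions $h^*$ and $g^*$. First I would observe that by Lemma~\ref{lemma:first_lemma} the inner problem $\min_{h\in\cH}\max_{g\in\cG}\bbE[\|h(\Phi(\bx))-\bt\|_2^2 - \|g(\Phi(\bx),\by)-\bt\|_2^2]$ decouples: the $h$-minimizer is $h^*(\cdot)=\bbE[\bt\mid\Phi(\bx)]$ and the $g$-maximizer of $-\bbE\|g(\Phi(\bx),\by)-\bt\|_2^2$ is exactly the $g$-minimizer of $\bbE\|g(\Phi(\bx),\by)-\bt\|_2^2$, namely $g^*(\cdot)=\bbE[\bt\mid\Phi(\bx),\by]$. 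Hence the constraint in \eqref{eqn:full_constraint_version} equals
\[
\bbE\big[(h^*(\Phi(\bx))-\bt)^2\big] - \bbE\big[(g^*(\Phi(\bx),\by)-\bt)^2\big] = 0 .
\]

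Next I would rewrite each term via the standard bias–variance / orthogonality decomposition of the $L^2$ projection: $\bbE[(\bt - \bbE[\bt\mid Z])^2] = \bbE[\mathrm{Var}(\bt\mid Z)]$ for $Z=\Phi(\bx)$ and for $Z=(\Phi(\bx),\by)$ respectively. By the law of total variance (tower property), conditioning on the finer $\sigma$-algebra generated by $(\Phi(\bx),\by)$ can only reduce the residual variance:
\[
\bbE[\mathrm{Var}(\bt\mid\Phi(\bx))] - \bbE[\mathrm{Var}(\bt\mid\Phi(\bx),\by)] = \bbE\big[\mathrm{Var}\big(\bbE[\bt\mid\Phi(\bx),\by]\ \big|\ \Phi(\bx)\big)\big] \geq 0 .
\]
So the constraint holds if and only if this nonnegative quantity is zero, which (since it is an expectation of a nonnegative random variable) holds iff $\bbE[\bt\mid\Phi(\bx),\by]$ is almost surely constant as a function of $\by$ given $\Phi(\bx)$, i.e. $\bbE[\bt\mid\Phi(\bx),\by] = \bbE[\bt\mid\Phi(\bx)]$ almost surely for every $\by\in\cY$. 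That is precisely the claimed equivalence, establishing both directions at once.

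For the ``if'' direction one can also argue directly: if $\bbE[\bt\mid\Phi(\bx)]=\bbE[\bt\mid\Phi(\bx),\by]$ for all $\by$, then $h^*(\Phi(\bx))$ is already a valid candidate for $g$ (it is $\Phi(\bx),\by$-measurable and, being the $L^2$ projection onto the coarser space, achieves the same residual as $g^*$), so the two expected squared errors coincide and the difference is $0$. I expect the only real subtlety — the ``main obstacle'' — to be bookkeeping about almost-sure versus everywhere statements and the quantifier over $\by\in\cY$: one must make sure that ``the expectation of the nonnegative variance-gap is zero'' is correctly translated into ``$\bbE[\bt\mid\Phi(\bx),\by]=\bbE[\bt\mid\Phi(\bx)]$ for all $\by$'' under whatever regularity on $\bbP(\by\mid\Phi(\bx))$ is implicit (e.g. every $\by$ with positive conditional mass, or a continuity assumption for continuous $\by$); I would state this measure-theoretic caveat explicitly rather than belabor it. Everything else is the routine $L^2$-projection and total-variance computation enabled by Lemma~\ref{lemma:first_lemma} and Assumptions~\ref{ass:h_space}--\ref{ass:g_space}.
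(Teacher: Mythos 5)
Your proposal is correct and follows essentially the same route as the paper: both reduce the constraint via Lemma~\ref{lemma:first_lemma} to the gap $\bbE[\bbV[\bt|\Phi(\bx)]]-\bbE[\bbV[\bt|\Phi(\bx),\by]]$ and show it is nonnegative with equality iff $\bbE[\bt|\Phi(\bx),\by]$ does not depend on $\by$; your invocation of the law of total variance is just a repackaging of the paper's expansion plus Jensen's inequality. The extra direct argument for the ``if'' direction and the almost-sure caveat are fine but not needed beyond what the paper does.
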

Proof in Appendix \ref{sec:app_proof_thm_infinite_samples}. The advantage of CIL is discussed as follows:

\textbf{The advantage of CIL in continuous domain tasks.} Consider a 10-class classification task with an infinite number of domains and each domain contains only one sample, i.e., $n \xrightarrow[]{} \infty$, and $n / |\cT| = 1$, Proposition~\ref{prop:continous_env_risk} shows that REx would fail to identify the invariant feature with constant probability. Whereas, Theorem~\ref{thm:infinite_samples} shows that CIL can still effectively extract invariant features. Intuitively,   existing methods aim to align $\bbE^t[\by|\Phi(\bx)]$ across different $\bt$ values. However, in continuous environment settings with limited samples per domain, the empirical estimations $\hat \bbE^t[\by|\Phi(\bx)]$ become noisy. These estimations deviate significantly from the true $\bbE^t[\by|\Phi(\bx)]$, rendering existing methods ineffective in identifying invariant features. In contrast, CIL proposes to align $\bbE^y[\bt|\Phi(\bx)]$, which can be accurately estimated as there are sufficient samples in each class.

{We have shown that the definite advantage of CIL based on Proposition 1 and Theorem 2. In the following part, we are going to show the finite sample property of our CIL for completeness. This finite sample analysis is a standard analysis of mini-max formulation based on the results of \cite{lei2021stability}.} We consider the empirical counterpart of the soft regularization version (\eqref{eqn:full_version}) for finite sample performance, i.e.,
\begin{equation}
    \label{eqn:full_version_empr}
    \min_{\bw, \Phi, h} \max_{g} \hat Q(\bw, \Phi, g, h) \nonumber :=  \hat \bbE_{\bx, \by, \bt} \Big [\ell(\bw(\Phi(\bx)), \by) +  \lambda \big(
      \| h(\Phi(\bx)) - \bt \|^2_2 - \| g(\Phi(\bx), \by)- \bt\|_2^2 \big)
     \Big] \nonumber.
\end{equation}
where $\hat \bbE$ is the empirical counterpart of expectation $\bbE$. Suppose that we  solve \eqref{eqn:full_version_empr}
with SGDA (Algorithm \ref{algo:cirm}) and obtain $(\hat w, \hat \Phi, \hat g, \hat h)$, which is a $(\epsilon_1, \epsilon_2)$ optimal solution of \eqref{eqn:full_version_empr}. Specifically,  we have
\begin{equation}
    \hat Q(\hat w, \hat \Phi, \hat h, \hat g) \leq \inf_{w, \Phi, g} Q(w, \Phi, h, \hat g) + \epsilon_1, \quad
    \hat Q(\hat w, \hat \Phi, \hat h, \hat g)  \geq \sup_{g} Q(\hat w, \hat \Phi, \hat h, g) - \epsilon_2.
\end{equation}
In the following, we denote 
$Q^*~(w, \Phi, h)~:=~\sup_{g}~Q(w, \Phi, h, g)$. We can see that a small $Q^*~(w, \Phi, h)$  indicates that the model $(w, \Phi, h)$ achieves a small prediction loss of $Y$ as well as a small invariance penalty. 
\begin{prop}
\label{thm:finite_sample}
Suppose we solve \eqref{eqn:full_version_empr} by SGDA as introduced in Algorithm \ref{algo:cirm} using a training dataset of size $n$ and obtain an $(\epsilon_1, \epsilon_2)$ solution $(\hat w, \hat \Phi, \hat g, \hat h)$. Under the assumptions specified in the appendix, we then have with probability at least $1-\delta$ that
\begin{align*}
    Q^*~(\hat w,& \hat \Phi, \hat h) \leq   (1+\eta)\inf_{w, \Phi, h} Q^*~(w, \Phi, h) + \frac{1+\eta}{\eta} \Big( \epsilon_1 + \epsilon_2  + \Tilde{O}(1/n)\log( {1/\delta} ) \Big),
\end{align*}
where % $n$ is the sample size of the training dataset, 
$\Tilde{O}$ absorbs logarithmic and constant variables which are specified in the Appendix \ref{sec:app_proof_thm_finite_sample}.
\end{prop}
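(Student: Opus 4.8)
\textbf{Proof proposal for Proposition \ref{thm:finite_sample}.}

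The plan is to bound the population quantity $Q^*(\hat w, \hat\Phi, \hat h)$ by its empirical counterpart plus a uniform deviation term, then use the $(\epsilon_1,\epsilon_2)$-optimality to relate the empirical quantity to the infimum of the population objective, and finally convert the resulting additive bound into the claimed multiplicative-plus-slack form. First I would set up a standard generalization-gap argument: define the per-sample loss $q(w,\Phi,h,g;\bx,\by,\bt) = \ell(\bw(\Phi(\bx)),\by) + \lambda(\|h(\Phi(\bx))-\bt\|_2^2 - \|g(\Phi(\bx),\by)-\bt\|_2^2)$, and argue that under the boundedness/Lipschitz assumptions (to be stated in the appendix, e.g., bounded outputs of $\Phi,\bw,h,g$, bounded $\bt$, Lipschitz $\ell$) the function class $\{q(w,\Phi,h,g;\cdot)\}$ has controlled complexity (finite pseudo-dimension or bounded Rademacher complexity of order $\tilde O(1/\sqrt n)$, or a fast rate $\tilde O(1/n)$ if one invokes a localization / Bernstein-type argument exploiting that the excess risk is nonnegative). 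Since $Q^*$ is a supremum over $g$, the relevant deviation must be uniform over $g$ as well; I would handle this by taking the supremum inside and using a uniform convergence bound over the joint class, giving $\sup_{w,\Phi,h,g}|\hat Q - Q| \le \tilde O(1/n)\log(1/\delta)$ with probability $\ge 1-\delta$ (or $1/\sqrt n$, absorbed into $\tilde O$).

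Next I would chain the inequalities. Let $(w^\circ,\Phi^\circ,h^\circ)$ be a near-minimizer of $Q^*(\cdot)$. Using the upper $(\epsilon_1)$-optimality condition at $\hat g$, then the uniform deviation bound in both directions, and then the lower $(\epsilon_2)$-optimality condition, one gets
\[
Q^*(\hat w,\hat\Phi,\hat h) \le \hat Q(\hat w,\hat\Phi,\hat h,\hat g) + \epsilon_2 + \mathrm{dev} \le \inf_{w,\Phi,h}\hat Q(w,\Phi,h,\hat g) + \epsilon_1 + \epsilon_2 + \mathrm{dev},
\]
and then $\inf_{w,\Phi,h}\hat Q(w,\Phi,h,\hat g) \le \hat Q(w^\circ,\Phi^\circ,h^\circ,\hat g) \le Q(w^\circ,\Phi^\circ,h^\circ,\hat g) + \mathrm{dev} \le Q^*(w^\circ,\Phi^\circ,h^\circ) + \mathrm{dev}$. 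Collecting terms yields an additive bound $Q^*(\hat w,\hat\Phi,\hat h) \le \inf Q^*(\cdot) + \epsilon_1 + \epsilon_2 + \tilde O(1/n)\log(1/\delta)$.

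Finally, to obtain the stated form with the factor $(1+\eta)$ and $\frac{1+\eta}{\eta}$, I would invoke the standard ``multiplicative slack'' trick: when the generalization term itself scales with the risk (a consequence of the fast-rate / self-bounding property, where the variance of $q$ is controlled by its mean), one applies the elementary inequality $ab \le \tfrac{\eta}{1+\eta}a^2 + \tfrac{1+\eta}{4\eta}b^2$ style rebalancing, or more directly solves the recursive inequality $R \le R_0 + c\sqrt{R/n} + \cdots$ to get $R \le (1+\eta)R_0 + \frac{1+\eta}{\eta}(\text{lower-order})$. This is where the $\eta$ free parameter enters, trading off the multiplicative constant against the additive remainder. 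The main obstacle, and the part requiring the most care, is the uniform convergence step over the adversarial class $\cG$ together with justifying the fast $\tilde O(1/n)$ rate rather than $\tilde O(1/\sqrt n)$: this needs either explicit structural assumptions on the capacities of $\cF,\cH,\cG$ (VC/pseudo-dimension, or parametric classes) and a localization argument, or Bernstein's inequality leveraging nonnegativity of the squared-loss excess risk — all of which I would defer to the precisely-stated assumptions in Appendix \ref{sec:app_proof_thm_finite_sample}. A secondary subtlety is that $\hat g$ appears on both sides (it is the empirical argmax, not the population one), so the chaining must be arranged so that every use of $\hat g$ is as a fixed function inside a uniform bound, never requiring $\hat g$ to be near-optimal for the population problem.
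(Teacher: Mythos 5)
Your chaining of inequalities is essentially the paper's decomposition: the paper writes $Q^*(\hat\theta)-\inf_\theta Q^*(\theta)$ as four terms $A_1=Q^*(\hat\theta)-\hat Q^*(\hat\theta)$, $A_2=\hat Q^*(\hat\theta)-\hat Q(\theta^*,\hat g)$, $A_3=\hat Q(\theta^*,\hat g)-Q(\theta^*,\hat g)$, $A_4=Q(\theta^*,\hat g)-Q^*(\theta^*)\le 0$, bounds $A_2$ by the primal--dual suboptimality $\epsilon_1+\epsilon_2$ exactly as you do, and obtains the $(1+\eta)$ multiplicative form from the same source you identify, namely absorbing terms of the shape $\sqrt{M\,Q(\cdot)\log(1/\delta)/n}$ via the elementary rebalancing inequality. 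Where you genuinely diverge is in how the deviation terms $A_1$ and $A_3$ are controlled: you propose uniform convergence over the joint class (Rademacher complexity or pseudo-dimension of $\{q(w,\Phi,h,g;\cdot)\}$, plus a localization argument for the fast rate), whereas the paper instead invokes \emph{algorithmic stability} --- it assumes SGDA is $\epsilon$-argument-stable, that $Q$ is $\mu$-strongly convex--concave, Lipschitz, and smooth, and then imports the high-probability stability-based generalization bounds of Li et al.\ (2021) for stochastic minimax optimization. The trade-off is real: your route needs explicit capacity control of $\cF$, $\cH$, $\cG$ (potentially vacuous for overparameterized networks) and a nontrivial localization step to reach $\tilde O(1/n)$, but makes no assumption about the optimizer; the paper's route sidesteps capacity entirely but leans on convex-concavity and on the stability parameter $\epsilon$ of SGDA, which then appears explicitly in the bound (as an $(\beta/\mu+1)L\epsilon\log_2 n\log(1/\delta)$ term hidden inside the $\tilde O$). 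Your observation that $\hat g$ must only ever be used as a fixed function inside a uniform bound is handled in the paper by the stability of the SGDA output itself ($A_3$ is a deviation at the fixed point $\theta^*$ evaluated at the stable random function $\hat g$), so that subtlety is resolved there by a different mechanism than the one you propose. Both routes yield the stated conclusion under their respective assumption sets.
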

% Note that as mentioned in \eqref{eqn:Q_star}, $g$ is absorbed into $Q^*$ by taking the supremum. $Q^*~(w, \Phi, h)$ measures the performance of $(w, \Phi, h)$, i.e., a small $Q^*~(w, \Phi, h)$ indicates that the model achieves small $\by$ prediction error and incurs small invariance penalty. Theorem \ref{thm:finite_sample} shows that $(\hat w, \hat \Phi, \hat h)$ can achieve good performance on minimizing $Q^*~(w, \Phi, h)$ with high probability if the following three conditions hold: 1) the sample size $n$ is sufficiently large, 2) the algorithm SGDA is sufficiently stable, and 3) the SGDA is solved accurately ($\epsilon_1$ and $\epsilon_2$ are small). Theorem \ref{thm:finite_sample} clearly shows the advantage of our method when compared with Proposition \ref{prop:continous_env_risk} which indicates existing invariance learning methods can fail at constant probability. 
% $\epsilon$ is often in the order of $O(1/n)$ because changing a single sample in a dataset with $n$ samples typically has a $O(1/n)$ impact on the outcome. This shows that the error terms decays fast with the sample size. Notably, an existing work \citep{ahuja2020empirical} shows IRMv1 has a convergence rate of $O(1/\sqrt{n})$, which is inferior to the convergence rate $O(1/n)$ shown in our work.  

\textbf{Empirical Verification}. We apply our CIL method on CMNIST \citep{arjovsky2019invariant} to validate our theoretical results.  We attach a continuous domain index for each sample in CMNIST.  The 50,000 samples of CMNIST are simulated to distribute uniformly on the domain index $t$ from 0.0 to 1000.0 (Refer to Appendix \ref{sec:app_detail_empirical_verify} for detailed description).  As Figure \ref{fig:IRM_envs} shows, CIL outperforms REx and IRMv1 significantly when REx and IRMv1 have many domains. 

\section{Experiments}
\label{sect:exp}
To evaluate our proposed \textbf{CIL} method, we conduct extensive experiments on two synthetic datasets and four real-world datasets, the synthetic logit dataset is presented in Appendix \ref{sec:app_logit_dataset}. 
we compare CIL with 1) Standard Empirical Risk Minimization (ERM), 2)  IRMv1  proposed in \cite{arjovsky2019invariant}, 3) REx in \eqref{eqn:rex} proposed in \cite{krueger2021rex}, and 4) GroupDRO proposed in \cite{sagawa2019distributionally} that minimize the loss of worst group(domains) with increased regularization in training,
5) Diversify proposed in \cite{lu2022out} and 6)
IIBNet proposed in \cite{li2022invariant}, adding invariant information bottleneck (IIB) penalty in training. 
Note that while CIDA~\citep{Wang2020ContinuouslyID} and its variants~\citep{GRDA,VDI,TSDA} also handle continuously indexed domains, they are domain adaptation methods and therefore not included as baselines (see Appendix~\ref{sec:related} for details). 
For IRMv1, REx, GroupDRO, and IIBNet, we try them on the original continuous domains as well as manually split the dataset with continuous domains into discrete ones (more details in separate subsections below). 
All the experiments are repeated at least three times and we report the accuracy with standard deviation on each dataset.

% \vspace{-3mm}
\subsection{Synthetic Datasets}

\subsubsection{Continuous CMNIST}\label{sec:exp-ccmnist}

\begin{table}[]
    \centering
    \resizebox{0.9\linewidth}{!}{
    \begin{tabular}{c|c|c|c|c|c|c|c}
        \toprule
        \multirow{2}*{\textbf{Env. Type}} & \multirow{2}*{\textbf{Method}} & \multicolumn{3}{|c|}{\textbf{Linear}}  & \multicolumn{3}{c|}{\textbf{Sine}} \\
        \cmidrule{3-8}
        ~&~&\textbf{Split Num} &  \textbf{ID}  & \textbf{OOD}&\textbf{Split Num} &  \textbf{ID}  & \textbf{OOD}\\
        \hline
        None &  ERM & -- & 86.38 (0.19) &13.52 (0.26) &--& 87.25 (0.46) & 16.05 (1.03) \\
        \midrule
        \multirow{4}*{Discrete} & IRMv1  &4 & 51.02 (0.86) &	49.72 (0.86) & 16& 49.74 (0.62) & 50.01 (0.34) \\
        % \cmidrule{3-8}
        ~ & REx  & 8  & 82.05 (0.67) & 49.31 (2.55) & 4 & 81.93 (0.91)&54.97 (1.71)\\
        % \cmidrule{3-8}
        ~ & GroupDRO & 16  & {99.16 (0.46)} &	30.33 (0.30) & 2 & {99.23 (0.04)} & 30.20 (0.30)\\
        % \cmidrule{3-8}
        ~ & IIBNet  & 8  &  63.25 (19.01) &	38.30 (16.63) & 4  &  61.26 (16.81) &	36.41 (15.87)\\
        \midrule
         \multirow{4}*{Continuous}
          & IRMv1 & --  & 49.57 (0.33) &48.70 (2.65) &-- & 50.43 (1.23) &49.63 (15.06)\\
         ~ & REx & --  & 78.98 (0.32)  &41.87 (0.48) & -- & 79.97 (0.79)& 42.24 (0.74) \\
         ~  & Diversify  & --  & 50.03 (0.04) & 50.11 (0.09) &-- & 50.07 (0.06) & 50.27 (0.21) \\
         % ~ & \textcolor{blue}{EIIL}& --  & \textcolor{blue}{96.82 (0.51)} & \textcolor{blue}{22.74 (1.25)} &--  & \textcolor{blue}{96.93 (0.42)} & \textcolor{blue}{21.72 (1.68)}\\
         % ~ & \textcolor{blue}{HRM} & --  & \textcolor{blue}{86.85 (0.47)} & \textcolor{blue}{32.34 (1.70)} &--  & \textcolor{blue}{87.07 (0.46)} & \textcolor{blue}{32.07 (1.77)}\\
        ~ &CIL (Ours) & --  & 57.35 (6.89)& \textbf{57.20 (6.89)} & --  & 69.80 (3.95)& \textbf{59.50 (8.67)}\\
        \bottomrule  
    \end{tabular}
   }
    \caption{Accuracy on Continuous CMNIST for Linear and Sine $p_s(\bt)$. The standard deviation in brackets is calculated with 5 independent runs. The Env. type ``Discrete'' means that we manually create by equally splitting the raw continuous domains. The environment type ``Continuous'' indicates using the original continuous domain index. ``Split Num'' stands for the number of domains we manually create and we report the best performance among spilt $\{2, 4, 8, 16\}$.  Detailed results in Appendix~\ref{app:add_exp_cmnist}}
    \label{tab:CMNIST}
\end{table}

% \begin{table}[]
%     \resizebox{1.\linewidth}{!}{
    
%     }
    
%     \label{tab:CMNIST}
% \end{table}

\textbf{Setting}. We construct a continuous variant of CMNIST following \cite{arjovsky2019invariant}. The digit is the invariant feature $\bx_v$ and the color is the spurious feature $\bx_s$. Our goal is to predict the label of the digit, $\by$. We generate 1000 continuous domains. The correlation between $\bx_v$ and the label $\by$ is $p_v=75\%$, while the spurious correlation $p_s(\bt)$ changes among domains $\bt$, whose details are included in the Appendix \ref{app:add_exp_cmnist}. Similar to the previous dataset, we try two settings with $p_s(\bt)$ being a linear and Sine function.

\textbf{Results}. Table \ref{tab:CMNIST} reports the training and testing accuracy of methods on CMNIST in two settings. We also tried different domain splitting schemes for IRMv1, REx, GroupDRO, and IIBNet with the complete results in the Appendix \ref{app:add_exp_cmnist}. ERM performs very well in training but worst in testing, which implies ERM tends to rely on spurious features. GroupDRO achieves the highest accuracy in training but the lowest in testing except for ERM.
Our proposed CIL outperforms all baselines on two settings by at least 8\% and 5\%, respectively.
% It shows that CIL can learn the invariant features across environments are best learned by .
% \st{Finally, the performance of IRMv1 is also much inferior to our CIL.}

\subsection{Real-World Datasets}
\label{sec:exp_realworld}

\subsubsection{HousePrice}
\label{sec:exp_houseprice}
We also evaluate different methods on the real-world HousePrice dataset from Kaggle\footnote{\url{https://www.kaggle.com/c/house-prices-advanced-regression-techniques}}. Each data point contains 17 explanatory variables such as the built year, area of living room, overall condition rating, etc. The dataset is partitioned according to the built year, with the training
dataset in the period [1900, 1950] and the test dataset in the period (1950, 2000]. Our goal is to predict whether the house price is higher than the average selling price in the same year. The built year {is} regarded as the continuous domain index in CIL. We split the training dataset equally into 5 segments for IRMv1, REx, GroupDRO, and IIBNet with a decade in each segment.

\textbf{Results}. The training and testing accuracy is shown in Table \ref{tab:real_world_exp}. GroupDRO performs the best across all baselines both on training and testing, while IIBNet seems unable to learn valid invariant features in this setting. REx achieves high training accuracy but the lowest testing accuracy except for IIBNet, indicating that it learns spurious features. Our CIL outperforms the best baseline by over 5\% on testing accuracy, which implies the model trained by CIL relies more on invariant features. 
Notably, CIL also enjoys a much smaller variance on this dataset. 

% \begin{wraptable}{l}{0.7\textwidth}
\begin{table}[t]
    % \scriptsize
    \centering
    \resizebox{1.\linewidth}{!}{
    \begin{tabular}{c|c|c|c|c|c|c|c}
        \toprule
        \multirow{2}*{\textbf{Env. Type}} & \multirow{2}*{\textbf{Method}} & \multicolumn{2}{|c|}{\textbf{HousePrice}}  & \multicolumn{2}{c|}{\textbf{Insurance Fraud}} & \multicolumn{2}{c|}{\textbf{Alipay Auto-scaling}} \\
        \cmidrule{3-8}
        ~&~&  \textbf{ID}  & \textbf{OOD} &  \textbf{ID}  & \textbf{OOD}  &  \textbf{ID}  & \textbf{OOD}\\
        \hline
        None &  ERM & 82.36 (1.42) &73.94 (5.04) & 79.98 (1.17) & 72.84 (1.44)  & 89.97 (1.35) &57.38 (0.64) \\
        \midrule
        \multirow{5}*{Discrete} & IRMv1  & 84.29 (1.04) & 73.46 (1.41) & 75.22 (1.84) & 67.28 (0.64) & 88.31 (0.48) & 66.49 (0.10)\\
        % \cmidrule{3-8}
        ~& REx  & 84.23 (0.63) &71.30 (1.17) & 78.71 (2.09) &73.20 (1.65) & 89.90 (1.08) & 65.86 (0.40) \\
        % \cmidrule{3-8}
        ~ & GroupDRO  & {85.25 (0.87)} & 74.76 (0.98) & {86.32 (0.84)} & 71.14 (1.30) & {91.99 (1.20)} & 59.65 (0.98)  \\
        % \cmidrule{3-8}
        ~ & IIBNet   & 52.99 (10.34) & 47.48 (12.60)  & 73.73 (22.96) & 69.17 (18.04)  & 61.88 (13.01) & 52.97 (12.97)\\
         ~ & {InvRAT} & {83.33 (0.12)} & {74.41 (0.43)} & {82.06 (0.72)} & {73.35 (0.48)}  & {89.84 (1.38)} & {57.54 (0.47)}\\
        \midrule
        \multirow{10}*{Continuous}
         & IRMv1 & 82.45 (1.27) &75.40 (0.99) & 54.98 (3.74)& 52.09 (2.05) & 88.57 (2.29) & 66.20 (0.06)\\
         ~  & REx  &83.59 (2.01)& 68.82 (0.92) & 78.12 (1.64) & 72.90 (0.46)& 89.94 (1.64) & 63.95 (0.87)\\
         ~ & Diversify &  81.14 (0.61)& 70.77 (0.74)  &72.90 (7.39) & 63.14 (5.70)  &  80.16 (0.24) & 59.81 (0.09)\\
         ~ & {IIBNet} & {62.29 (4.40)} & {53.93 (3.70)} & {76.34 (5.20)} & {72.01 (6.99)}  & {80.49 (8.30)} & {58.89 (5.53)}\\
         ~ & {InvRAT} & {82.29 (0.76)} & {77.18 (0.41)} & {80.63 (1.04)} & {72.07 (0.74)}  & {88.74 (1.54)} & {60.58 (3.22)}\\
         ~ & {EIIL} & {82.62 (0.42)} & {76.85 (0.44)} & {80.60 (1.36)} & {72.44 (0.58)}  & {91.34 (1.50)} & {53.14 (0.74)}\\
         ~ & {HRM} & {84.67 (0.62)} & {77.40 (0.27)} & {81.93 (1.11)} & {73.52 (0.46)}  & {89.84 (1.20)} & {55.44 (0.35)}\\
         ~ & {ZIN} & {84.80 (0.60)} & {77.54 (0.30)} & {81.93 (0.73)} & {73.33 (0.43)}  & {90.56 (0.91)} & {58.99 (0.87)}\\
         
         ~ & {CIL (L1)} & {83.41 (0.75)} & {77.98 (1.02)} & {82.39 (1.40)} & {\textbf{76.54 (1.03)}}  & {81.44 (1.77)} & {68.51 (1.33)}\\
        ~ &CIL (L2)  & 82.51 (1.96)& \textbf{79.29 (0.77)} & 80.30 (2.06)& {75.01 (1.18)} & 81.25 (1.65)& \textbf{71.29 (0.04)} \\
        \bottomrule  
    \end{tabular}
    }
    \caption{Accuracy of each method on three real-world datasets with standard deviation in brackets.  Each method takes 5 runs independently. The details of the settings for HousePrice, Insurance Fraud, and Alipay Auto-scaling can be found in Section \ref{sec:exp_houseprice},  \ref{sec:exp_insurance}, and  \ref{sec:exp_alipya_autoscaling}, respectively. { CIL is our method. L1 or L2 means we use the L1 or L2 loss. We adopt L2 loss by default in other tables.} 
    }
    \label{tab:real_world_exp}
\end{table}

\subsubsection{Insurance Fraud}
\label{sec:exp_insurance}

This experiment conducts a binary classification task based on a vehicle insurance fraud detection dataset on Kaggle\footnote{\url{https://www.kaggle.com/code/girishvutukuri/exercise-insurance-fraud}}. 
% We only take training part because testing dataset is unlabeled.
After data preprocessing, each insurance claim contains 13 features including demographics, claim details, policy information, etc.
The customer's age is taken as the continuous domain index, where the training dataset contains customers with ages between (19, 49) and the testing dataset contains customers with ages between (50, 64). We equally partition the training dataset into discrete domains with 5 years in each domain for existing methods dependent on discrete domains. Results in  
IRMv1 is inferior to ERM in terms of both training and testing performances.  REx only slightly improves the testing accuracy over ERM. Table~\ref{tab:real_world_exp} shows that CIL performs the best across all methods, improving by about 2\% compared to other methods.

\subsubsection{Alipay Auto-scaling}
\label{sec:exp_alipya_autoscaling}

Auto-scaling \citep{qian2022robustscaler} is an effective tool in elastic cloud
services that dynamically scale computing resources (CPU, memory) to closely match the ever-changing computing demand.
Auto-scaling first tries to predict the CPU  utilization based on the current internet traffic. When the predicted CPU utilization exceeds a threshold, Auto-scaling would add  CPU computing resources to ensure a good quality of service (QoS) effectively and economically.
% {\color{blue}It will cause resource waste if we scale out resources during increasing internet traffic without considering the relationship between resource utilization and workload.} 
In this task, we aim to predict the relationship between CPU utilization and the current running parameters of the server in the Alipay cloud.
Each record includes 10 related features, such as the number of containers, and network flow, etc. We construct a binary classification task to predict whether CPU utilization is above the $13\%$ threshold or not, which is used in the cloud resource scheduling to stabilize the cloud system. 
% {\color{blue}The running time of minute frequency in a day is taken as continuous environments, which contains 1440 different environments, and we take hour frequency to split the dataset which means each segment has 60 different environments.}
We take the minute of the day as the continuous domain index. The dataset contains 1440 domains with 30 samples in each domain. We then split the continuous domains by consecutive 60 minutes as the discrete domains for IRMv1, REx, GroupDRO, and IIBNet.
The data taken between 10:00 and 15:00 are as the testing set because the workload variance in this time period is the largest and it exhibits obviously unstable behavior. All the remaining data serves as the training set. 
Table \ref{tab:real_world_exp} reports the performance of all methods. ERM performs the best in training but worst in testing, implying that ERM suffers from the distributional shift.  On the other hand, IRMv1 performs the best across all existing methods, exceeding ERM by 9\%. CIL outperforms ERM and IRMv1 by 13\% and 5\%, respectively, indicating its capability to recover a more invariant model achieving better OOD generalization.

\subsubsection{WildTime-YearBook}
\label{sec:exp_yearbook}

We adopt the Yearbook dataset in Wildtime benchmark \citep{yao2022wild}\footnote{\url{https://github.com/huaxiuyao/Wild-Time}}, which is {a} gender classification task on  images taken from American high school students as shown in Figure \ref{fig:yearbook}{.} 
The Yearbook consists of 37K images with the time index as domains. The training set consists of data collected from 1930-1970 and the testing set covers 1970-2013.     
% \remove{We conduct our experiments on WildTime benchmark} \citep{yao2022wild} \footnote{\url{https://github.com/huaxiuyao/Wild-Time}}.
We adopt the same dataset processing, model architecture, and other settings with {the} WildTime. {To keep consistent with \cite{yao2022wild}, we adopt the same baseline methods with \cite{yao2022wild} and directly copy the performance of baseline methods from \cite{yao2022wild}.For the details of these baselines, we refer the reader to the Appendix B of \cite{yao2022wild}.}
Table~\ref{tab:wilds_time} shows that we  achieve the {best} OOD performance of 71.22\%, improving about 1.5\% over the previous SOTA methods (marked with \underline{underline}).  

% \setlength{\tabcolsep}{1.pt}
% \begin{table}[t]
%     \footnotesize
%     \centering
%     \resizebox{0.5\linewidth}{!}{
%     \begin{tabular}{|c|c|c|}
%     \hline
%     Method & ID & OOD \\ \midrule
%     Fine-tuning & 81.98 & \underline{69.62}\\
%     EWC \citep{kirkpatrick2017overcoming} & 80.07 &  66.61\\
%     SI \citep{zenke2017continual} & 78.70& 65.18\\
%     A-GEM \citep{lopez2017gradient} & 81.04  &  67.07\\ \midrule
%     ERM & 79.50& 63.09\\
%     GroupDRO-T \citep{sagawa2019distributionally} & 77.06 &  60.96\\
%     mixup\citep{zhang2017mixup} & 83.65& 58.70\\
%     CORAL-T\citep{sun2016deep} & 77.53  & 68.53\\
%     IRM-T \citep{arjovsky2019invariant} &80.46 &  59.34 \\ \midrule
%     SimCLR \citep{chen2020simple}& 78.59 & 64.42\\
%     Swav \citep{caron2020unsupervised}&  78.38  &  60.15\\ \midrule
%     SWA \citep{izmailov2018averaging} & {{84.25}} & 67.90 \\ \midrule
%     \textbf{CIL(Ours)} & 82.89& \textbf{71.22}\\ \midrule

%     \end{tabular}
%     }
%     \caption{The accuracy on the worst test OOD domain of each method on Yearbook dataset on Wild-time. The performance of baseline methods is copied from \cite{yao2022wild}. }
%     \label{tab:wilds_time}
%     \vspace{-1mm}
% \end{table} 
\setlength{\tabcolsep}{1.pt}
\begin{figure}[t]
  \begin{minipage}[b]{.45\linewidth}
    \centering
    \includegraphics[width=1.\linewidth]{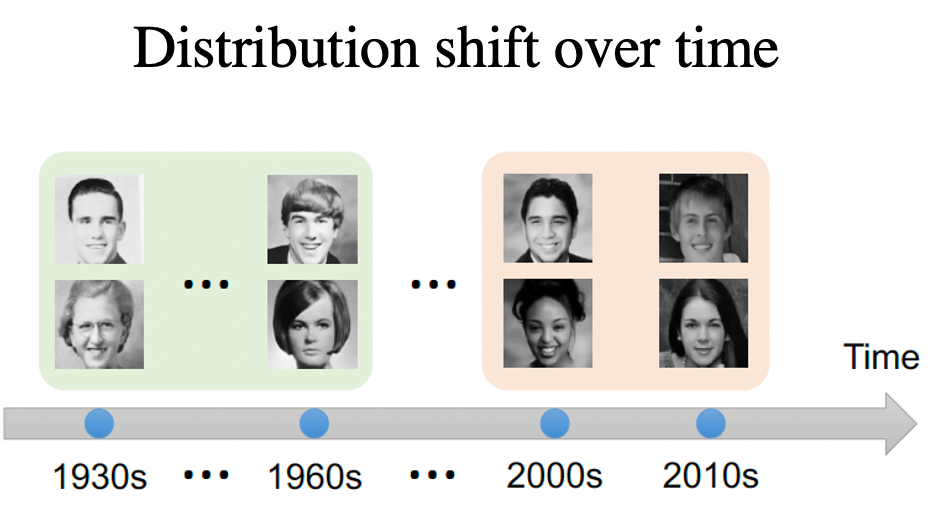}
    \caption{An illustration Yearbook~\citep{yao2022wild}. Images taken from \cite{yao2022wild}. }
    \label{fig:yearbook}
  \end{minipage}\hfill
  \begin{minipage}[b]{.5\linewidth}
  \centering
  \resizebox{0.8\linewidth}{!}{
    \begin{tabular}{|c|c|c|}
    \hline
    Method & ID & OOD \\ \midrule
    Fine-tuning & 81.98 & \underline{69.62}\\
    EWC \citep{kirkpatrick2017overcoming} & 80.07 &  66.61\\
    SI \citep{zenke2017continual} & 78.70& 65.18\\
    A-GEM \citep{lopez2017gradient} & 81.04  &  67.07\\ \midrule
    ERM & 79.50& 63.09\\
    GroupDRO-T \citep{sagawa2019distributionally} & 77.06 &  60.96\\
    mixup\citep{zhang2017mixup} & 83.65& 58.70\\
    CORAL-T\citep{sun2016deep} & 77.53  & 68.53\\
    IRM-T \citep{arjovsky2019invariant} &80.46 &  59.34 \\ \midrule
    SimCLR \citep{chen2020simple}& 78.59 & 64.42\\
    Swav \citep{caron2020unsupervised}&  78.38  &  60.15\\ \midrule
    SWA \citep{izmailov2018averaging} & {{84.25}} & 67.90 \\ \midrule
    \textbf{CIL (Ours)} & 82.89& \textbf{71.22}\\ \midrule
    \end{tabular}
    }
    \caption{The accuracy on the worst test OOD domain of each method on Yearbook dataset on Wild-time. The performance of baseline methods is copied from \cite{yao2022wild}. }
    \label{tab:wilds_time}
  \end{minipage}\hfill

\end{figure}

% \setlength{\tabcolsep}{1.pt}
% \begin{wraptable}{r}{0.5\linewidth}
%     \footnotesize
%     \centering
%     \resizebox{1.\linewidth}{!}{
%     \begin{tabular}{|c|c|c|}
%     \hline
%     Method & ID & OOD \\ \midrule
%     Fine-tuning & 81.98 & \underline{69.62}\\
%     EWC \citep{kirkpatrick2017overcoming} & 80.07 &  66.61\\
%     SI \citep{zenke2017continual} & 78.70& 65.18\\
%     A-GEM \citep{lopez2017gradient} & 81.04  &  67.07\\ \midrule
%     ERM & 79.50& 63.09\\
%     GroupDRO-T \citep{sagawa2019distributionally} & 77.06 &  60.96\\
%     mixup\citep{zhang2017mixup} & 83.65& 58.70\\
%     CORAL-T\citep{sun2016deep} & 77.53  & 68.53\\
%     IRM-T \citep{arjovsky2019invariant} &80.46 &  59.34 \\ \midrule
%     SimCLR \citep{chen2020simple}& 78.59 & 64.42\\
%     Swav \citep{caron2020unsupervised}&  78.38  &  60.15\\ \midrule
%     SWA \citep{izmailov2018averaging} & {{84.25}} & 67.90 \\ \midrule
%     \textbf{CIL(Ours)} & 82.89& \textbf{71.22}\\ \midrule

%     \end{tabular}
%     }
%     \caption{The accuracy on the worst test OOD domain of each method on Yearbook dataset on Wild-time. The performance of baseline methods is copied from \cite{yao2022wild}. }
%     \label{tab:wilds_time}
%     \vspace{-1mm}
% \end{wraptable} 

\section{Conclusion and Discussion}
We proposed Continuous Invariance Learning (CIL) that extends invariance learning from discrete categorical indexed domains to natural continuous domain in this paper and theoretically demonstrated that CIL is able to learn invariant features on continuous domains under suitable conditions. However, learning invariance would be more challenging with larger DNNs due to IRM's inherent sensitivity to over-fitting \citep{lin2022bayesian, zhou2022sparse}. Recent works has shown the effectiveness of so called spurious feature diversification \citep{lin2023spurious}, which has shown very promising performance even on modern large language models \citep{lin2023speciality}. It would be an interesting future direction to explore feature diversification on continuous domains.  

% \noindent\textbf{Limitation} Our framework uses min-max strategy which acquires the suboptimal solution in specific scenarios, and take enough discrete time points as continuous domains in experiments without true continuous environments.

% \noindent \textbf{Social Impact} 
% We extend invariant learning into the continuous domains which is more natural to real-world tasks. Specifically, it would be helpful to solve the OOD problems related to time domains.

%utilize the auxiliary information to aid invariance learning without environmental indexes. 
% This method is also helpful to fairness issues; see, e.g., the discussion about out-of-distribution generation and algorithmic fairness in [10]. For the additional information, we
% should also avoid using demographic, private, and sensitive information.
% \paragraph{Limitation} Our framework relies on the minim
\clearpage

\bibliography{iclr2024_conference}
\bibliographystyle{iclr2024_conference}

\appendix
\onecolumn
\section{Limitation and Social Impact}
\noindent\textbf{Limitation} Our framework uses min-max strategy which acquires the suboptimal solution in specific scenarios, and takes enough discrete time points as continuous domains in experiments without true continuous environments with infinity domain indices.

\noindent \textbf{Social Impact} 
We extend invariant learning into the continuous domains which is more natural to real-world tasks.
Specifically, it would be helpful to solve the OOD problems related to time domains, e.g. it has been applied on Alipay Cloud to achieve the auto-scaling of server resources.

\section{Related Work}\label{sec:related}
\textbf{Causality, invariance, and distribution shift.} 
The invariance property is well known in causal literature back to early 1940s \citep{haavelmo1944probability}, showing the conditional of the target given its direct causes are invariant under intervention on any node in the causal graph except for the target itself. In 2016,  Invariant Causal Prediction (ICP) is first proposed in \cite{peters2016causal} to utilize invariance property to identify the direct causes of the target. 
Models relying on the target's direct causes are robust to interventions. From the  perspective of causality, the distributional shift in the testing distribution is due to the interventions on the causal graph \citep{arjovsky2019invariant}. So a model with invariance property can hopefully generalize even in the existence of distributional shifts. Anchor regression \citep{rothenhausler2018anchor} build the connection between distributional robustness with causality, and demonstrates that a suitable penalty in anchor regression based on the causal structure is equivalent to ensure a certain degree of robustness to distributional shift.  Notably, \cite{peters2016causal, rothenhausler2018anchor} both assume the inputs $\bx$ are handcrafted meaningful features, which limits their applications in machine learning and deep learning where the input can be raw images. 

\cite{arjovsky2019invariant} proposed the first invariance learning method, invariant risk minimization (IRM), which extends ICP \citep{peters2016causal} to deep learning by incorporating feature learning. Invariance learning has gained its popularity in recent years and inspired a line of excellent works, where a variety of variants have been proposed. To name a few, \cite{krueger2021rex, xie2020variance_penalty} penalize the variance of losses in domains; \cite{ahuja2020invariant} incorporates game theory into invariance learning; \cite{chang2020invariant} estimates the violation of  invariance  by training multiple independent networks; \cite{jin2020domain} proposes an invariance penalty based on regret. Notably, these works all require discretely indexed domains.  
Another line of works try to learn invariant features when explicit domain indices are not provided \citep{creager2021environment}. However, \cite{lin2022zin} theoretically shows that it is generally impossible to learn invariance without environmental information. Some recent works \citep{lin2022bayesian, zhou2022sparse} show that invariance learning methods are sensitive to overfitting caused by the  overparameterization of deep neural networks. These works are orthogonal to our study.

% For the theoretical perspective, it is shown in \cite{arjovsky2019invariant} that IRM can learn invariant features in a linear case given sufficient number of environments with infinite samples. \cite{rosenfeld2020risks} shows the minimum number of environments should be larger than the number of spurious features and they further argue IRM can fail when there is a large marginal shift when the model is non-linear. \cite{zhou2022sparse} shows that IRM can fail to learn invariance in a overparameterized linear case while sparsity can make IRM work again.   

% {\color{red}{content reduce or move all to appendix}}
\textbf{Distribution Shift with Continuous Domains.} There are a few methods considering domain shifts with continuous domain indexes. \cite{bobu2018adapting,hoffman2014continuous,wulfmeier2018incremental, bitarafan2016incremental} consider the distribution shifts incrementally over time, while trying to perform domain adaptation sequentially.  Continuously indexed domain adaption (CIDA) \citep{Wang2020ContinuouslyID} is the first to adapt across (multiple) continuously indexed domains simultaneously. Note that these works all assume that \emph{the input $\bx$ of the samples in the testing domains are available}, and are therefore \emph{not} applicable for the OOD generalization tasks considered in this paper. 
% The method proposed in 
\cite{zhang2017causal} tries to discover causal graph based on the continuous heterogeneity, but assumes that the features are given (cannot be learned from raw input), making it not applicable in our setting either. 
% to popular machine learning paradigms. 
% To mitigate the harmful effect of distribution shift across a continuous range of environments, a.k.a. domains, continuously indexed domain adaption (CIDA) \citep{Wang2020ContinuouslyID} was recently proposed. Essentially it performs two-party adversarial game where the novel discriminator tries to regresses the domain index, and the encoder attempts to generate  domain-invariant features. Our work is substantially different because :1) we rely on the invariance penalty originated from IRM framework\citep{arjovsky2019invariant} to extract the invariant features that are independent of domain indexes; 2) we do not assume the target domains are available during training, which is a more challenging and realistic setup.

% {\color{blue}The field of meta-learning \citep{li2020sequential} offers another approach to tackle this problem by repeatedly splitting the source domain to ``meta-train" and ``meta-test" to mimic the distribution shift. However, they are similar to continual learning limited to the sequential domains without interruption and such domain splitting is infeasible when we have infinitely many domains. }

\section{Invariance Property in Causality.}
\label{sec:app_inv_property}
% \textbf{}{Invariance Property in Causality.} 
Figure \ref{fig:icp} shows an example (similar to Figure 1 of ICP \citep{peters2016causal}) of an causal system with nodes $(y, x_1, x_2, x_3, x_4)$. Suppose our task is to build a model based on a subset of $\{x_1, x_2, x_3, x_4\}$ to predict $y$, under the distributional shifts in different domains caused by interventions.
In our example, there are interventions on node $x_2$ in domain 2, and interventions on nodes $x_3$ and $x_4$ in domain 3. Notably, we do not allow interventions on target $y$ itself (indicating that the noise ratio of $y$ should be constant among domains). The invariance property shows that the conditional probability of $y$ given its parents remains the same on interventions on any nodes except for the $y$ itself. In this example,  $P(y|x_2, x_3)$ remains the same under all the three domains.  However, one can check that $P(y|x_1)$ and $P(y|x_4)$ changes in domain 2 and 3, respectively. Therefore it is safe to build the model on $(x_2, x_3)$ to predict $y$. In contrast, $x_1$ and $x_4$ are unreliable because the conditional distribution is unstable under distributional shifts. In this example, $\bx_v$ is $\{x_2, x_3\}$, and $\bx_s$ is $\{x_1, x_4\}$.

\begin{figure}[h]
    \centering
    \includegraphics[scale=0.32]{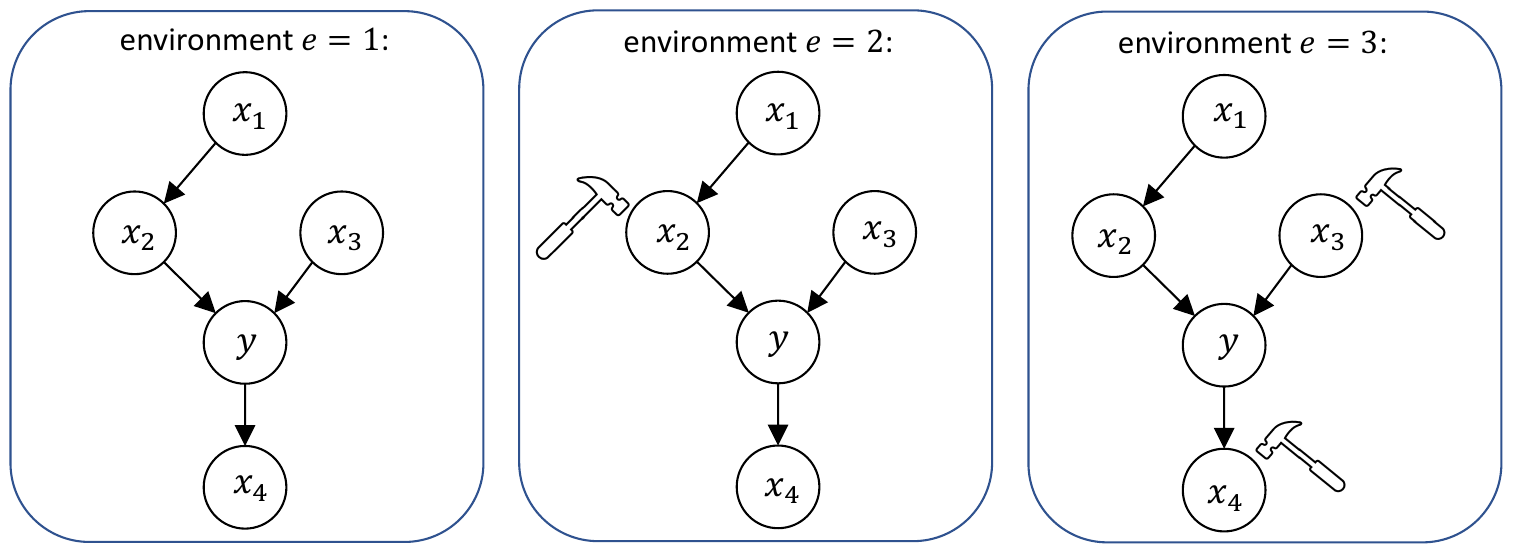}
    \caption{An illustration of the invariance property in causality (similar to Figure 1 in ICP \citep{peters2016causal}). This figure shows a causal system with five nodes, $y$, $x_1$, $x_2$, $x_3$, and $x_4$. Our task is to predict $y$ based on the $x'$s. There are different interventions in different domains, leading to distributional shifts. Intervention on a node can be simply interpreted as changing the node value. The changes can propagate to the descendants of the intervened node. The invariance property shows that $P(y|x_2, x_3)$ remains the same in all three domains. In contrast, $P(y|x_1)$ and $P(y|x_4)$ changes in domain 2 and 3 due to interventions, respectively. So it is safe to build model on $x_2$ and $x_3$ to predict $y$, which is expected to be stable under novel testing distribution.}
    \label{fig:icp}
\end{figure}

\section{Discussions on the first moments}
\label{app:discuss_first_moment}
To encourage conditional independence $\by \perp \bt | \Phi(\bx)$, existing IRM variants try to align $\bbE^{t}[\by|\Phi(\bx)]$ (see Section~\ref{sect:preliminary}) and our CIL proposes to align $\bbE^{y}[\bt|\Phi(\bx)]$ (see Section~\ref{sect:cil}). 

\subsection{Discrete Domain Case}
For discrete domain problems, when there are a sufficient number of samples in each domain $\bt$ and class $\by$, $\bbE^{y}[\bt|\Phi(\bx)]$ and $\bbE^{t}[\by|\Phi(\bx)]$ performs similarly (both reflecting the first moment of $\by \perp \bt | \Phi(\bx)$, although from different perspectives), and neither of them has a clear advantage over the other. Consider the following example, which has gained popularity~\citep{lin2022zin}:
We have a binary classification task, $\by \in \{-1, 1\}$, containing 2 domains $\bt = \{1, 2\}$, one invariant feature $\bx_v$ and one spurious feature $\bx_s$. Furthermore, we have
\begin{align}
    \bx_v = \begin{cases}
        1, \text{with prob }  0.5,\\
        -1 , \text{with prob } 0.5,
    \end{cases} \quad \by = \begin{cases}
        \bx_v, \text{ with prob } 0.8, \\
        -\bx_v, \text{ with prob } 0.2, 
    \end{cases} \text{ hold for both domains.} 
\end{align}
And 
\begin{align}
   \text{In domain 1, } \bx_s = \by, \text{in domain 2, } \bx_s = \begin{cases}
        \by, \text{with prob }  0.5,\\
        -\by , \text{with prob } 0.5, 
    \end{cases}  
\end{align}
In other words, the correlation between the invariant feature $\bx_v$ and the label $\by$ is always 0.8. The correlation between $\bx_s$ and $\by$ is 1 in domain 1 and 0.5 in domain 2. By simple calculation, we have
\begin{align*}
    &\bbE^{t=1}[\by|\bx_s=1] = 1, \quad   \bbE^{t=2}[\by|\bx_s=1] = 0 \\
    &\bbE^{t=1}[\by|\bx_s=-1] = -1, \quad   \bbE^{t=2}[\by|\bx_s=-1] = 0 \\
    &\bbE^{t=1}[\by|\bx_v=1] = 1, \quad   \bbE^{t=2}[\by|\bx_v=1] = 1 \\
    &\bbE^{t=1}[\by|\bx_v=-1] = -1, \quad   \bbE^{t=2}[\by|\bx_v=-1] = -1 \\
\end{align*}
and 
\begin{align*}
    &\bbE^{y=1}[\bt|\bx_s=1] = 4/3, \quad  \bbE^{y=-1}[\bt|\bx_s=1] = 2\\
    &\bbE^{y=1}[\bt|\bx_s=-1] = 2 , \quad   \bbE^{y=-1}[\bt|\bx_s=-1] = 4/3 \\
    &\bbE^{y=1}[\bt|\bx_v=1] = 3/2, \quad \bbE^{y=-1}[\bt|\bx_v=1] = 3/2   \\
    &\bbE^{y=1}[\bt|\bx_v=-1] = 3/2 , \quad   \bbE^{y=-1}[\bt|\bx_v=-1] = 3/2. \\
\end{align*}
So we can observe that $\bbE^{y}[\bt|\bx_v]$ is the same for all $\by$, and $\bbE^{t}[\by|\bx_v]$ is the same for all $\bt$. Additionally, $\bbE^{y}[\bt|\bx_s]$ is different for $y=-1$ and $y=1$, while $\bbE^{t}[\by|\bx_s]$ is different for $t=1$ and $t=2$. In other words, if we aim to select a feature $\bx$ from $\bx \in \{\bx_v, \bx_s\}$ to align either $\bbE^{y}[\bt|\bx]$ or $\bbE^{t}[\by|\bx_s]$, we would choose the invariant feature $\bx_v$ in both cases. So our CIL method and existing methods performs similarly in this example with 2 domains.

\subsection{Continuous Domain Case}
Consider a 10-class classification task with an infinite number of domains and each domain contains only one sample, i.e., $n \xrightarrow[]{} \infty$, and $n / |\cT| = 1$, Proposition~\ref{prop:continous_env_risk} shows that REx would fail to identify the invariant feature with constant probability. Whereas, Theorem~\ref{thm:infinite_samples} shows that CIL can still effectively extract invariant features. Intuitively,   existing methods aim to align $\bbE^t[\by|\Phi(\bx)]$ across different $\bt$ values. However, in continuous environment settings with limited samples per domain, the empirical estimations $\hat \bbE^t[\by|\Phi(\bx)]$ become noisy. These estimations deviate significantly from the true $\bbE^t[\by|\Phi(\bx)]$, rendering existing methods ineffective in identifying invariant features. In contrast, CIL proposes to align $\bbE^y[\bt|\Phi(\bx)]$, which can be accurately estimated  as there are sufficient sample in each class.  

\section{Algorithm}
\begin{algorithm}[h]
 \caption{CIL: Continuous Invariance Learning}
 \begin{algorithmic}[1]
 \label{algo:cirm}
 \renewcommand{\algorithmicrequire}{\textbf{Input:}}
 \renewcommand{\algorithmicensure}{\textbf{Output:}}
 \REQUIRE Feature extractor $\Phi$, label classifier $\omega$, domain index regressor $h$ and $g$; The training dataset $\cD^{\text{tr}} = \{(\bfx_i, \bfy_i, \bft_i)_{i=1}^n\}$.
 \ENSURE  The learned $\Phi$, classifier $\omega$, the domain regressor $h$ and $g$.
 \\ \STATE Initialize $\Phi$, $\omega$, $h$ and $g$. 
%  \\ \textit{LOOP Process}
  \WHILE{Not Converge} 
    \STATE Sample a batch  $\cB$ from $\cD^{\text{tr}}$.
    \STATE Obtain the loss for $g$ on the batch $\cB$, \\
    $\cL_\cB(\Phi, g)~=~ 1/|\cB|\sum_{(\bfx, \bfy, \bft) \in \cB} \|g(\Phi(\bfx), \bfy), \bft \|_2^2.$
    \STATE Perform one step of gradient ascent on $\cL_\cB(\Phi, g)$ w.r.t.~$g$, i.e., $g \xleftarrow{} g+\eta \nabla_g \cL_\cB(\Phi, g)$
    \STATE Obtain the loss for $(\Phi, \omega, h)$ on $\cB$, $\cL_{\cB}(\omega, \Phi, h) = 1/|\cB| \sum_{\bfx, \bfy, \bft \in \cB}\left( \ell(\omega(\Phi(\bfx)), \bfy) + \lambda \| h(\Phi(\bfx)), \bft\|_2^2\right)$
    \STATE Perform one step of gradient descent on $\cL_\cB(\omega, \Phi, h)$ w.r.t.~$(\omega, \Phi, h)$, i.e., \\
    $(\omega, \Phi, h) \xleftarrow{} (\omega, \Phi, h) +\eta \nabla_{(\omega, \Phi, h)} \cL_\cB(\omega, \Phi, h)$
    
  \ENDWHILE
 \RETURN $(\omega, \Phi, h, g)$
 \end{algorithmic}
 \end{algorithm}

{
\section{Proof of Proposition \ref{prop:continous_env_risk}}
\label{app:proof_of_prop}

{Recall that $\cR^t(\bw, \Phi) := \bbE^t[\ell(\bw(\Phi(\bx)), \by)]$ denotes the lose of $(\bw, \Phi)$ in domain $t$, where $\bw$ is the classifier $\Phi$ is the feature selector, and $\ell$ is the lose function. If there are many domains (i.e., $\cT$ is large) and each domain only contains limited sample (i.e., $n/|\cT|$ is limited), then the empirical REx loss is as follows:}
\begin{align*}
    \hat \cL(\Phi(\bx))  = &  \sum_{t \in \cT} \hat \cR^t(\omega, \Phi) + \lambda |\cT|\text{Var}(\hat \cR^t(\omega, \Phi)) \\
   = & \sum_{t \in \cT} \cR^t(\omega, \Phi) + \left(\sum_{t \in \cT} \cR^t(\omega, \Phi) - \sum_{t \in \cT} \hat \cR^t(\omega, \Phi)  \right) \\
   & + \lambda |\cT|\left(- \left(\cR^t(\omega, \Phi) - \hat \cR^t(\omega, \Phi)\right)  + \left(\cR(\omega, \Phi) - \hat \cR(\omega, \Phi) \right) + \left(\cR^t(\omega, \Phi) - \cR(\omega, \Phi)\right)\right)^2 \\
   =  &  \sum_{t \in \cT} \cR^t(\omega, \Phi) + \lambda \underbrace{ \sum_{t \in \cT}\left(\cR^t(\omega, \Phi) - \cR(\omega, \Phi)\right)^2}_{A_0} \\
   & + \underbrace{\sum_{t \in \cT} \left( \cR^t(\omega, \Phi) -  \hat \cR^t(\omega, \Phi)  \right)}_{A_1} + \underbrace{\sum_{t \in \cT} \left(\cR^t(\omega, \Phi) - \hat \cR^t(\omega, \Phi)\right) ^2}_{A_2} + \underbrace{\sum_{t \in \cT} \left( \cR(\omega, \Phi) -  \hat \cR(\omega, \Phi)  \right)^2}_{A_3} \\
   & - \underbrace{\sum_{t \in \cT} 2 \left(\cR^t(\omega, \Phi) - \hat \cR^t(\omega, \Phi)\right)\left(\cR(\omega, \Phi) - \hat \cR(\omega, \Phi) \right)}_{A_4} \\
   & - \underbrace{\sum_{t \in \cT} 2 \left(\cR^t(\omega, \Phi) - \hat \cR^t(\omega, \Phi)\right)\left(\cR^t(\omega, \Phi) - \cR(\omega, \Phi)\right)}_{A_5} \\
   & + \underbrace{\sum_{t \in \cT} 2\left(\cR(\omega, \Phi) - \hat \cR(\omega, \Phi) \right) \left(\cR^t(\omega, \Phi) - \cR(\omega, \Phi)\right)}_{A_6} 
\end{align*}
We then have 
\begin{align*}
    A_0 = & 
    \begin{cases}
    \sigma_R \chi(|\cT|), \mbox{ if } \Phi = \Phi_s\\
    0, \mbox{ if }  \Phi = \Phi_v\\
    0, \mbox{ if }  \Phi = \Phi_{null}
    \end{cases} \\
    A_1 = & \sum_{t \in \cT} \sum_{i=1}^{n_e} \epsilon_i \sim \sigma/\sqrt{n} \cN(0, 1) \\
    A_2 = & \sum_{t}(\sum_{i=1}^{n_e} \epsilon_i)^2 \sim \sigma/\sqrt{n_e} \chi(|\cT|) \\
    A_3 = & \sum_{t}(\sum_{i=1}^{n} \epsilon_i)^2 \sim \sigma/\sqrt{n} \chi(|\cT|) \\
    A_4 = & 2 \left(\sum_{t \in \cT} \left(\cR^t(\omega, \Phi) - \hat \cR^t(\omega, \Phi)\right) \right) \left(\cR(\omega, \Phi) - \hat \cR(\omega, \Phi) \right)  \\
        = & 2 \frac{1}{n}\sum_{i=1}^n \epsilon_i \frac{1}{n} \sum_{i=1}^n \epsilon_i  \sim 2 \frac{\sigma}{\sqrt{n}} \chi(1) \\
    A_5 \sim & \sqrt{\sigma^2/\sqrt{n_e} + \sigma_R^2}\chi(|\cT|) - \sqrt{\sigma^2/\sqrt{n_e} + \sigma_R^2}\chi(|\cT|)\\
    A_6 = &  2\left(\cR(\omega, \Phi) - \hat \cR(\omega, \Phi) \right) \sum_{t \in \cT} \left(\cR^t(\omega, \Phi) - \cR(\omega, \Phi) \right) = 2\left(\cR(\omega, \Phi) - \hat \cR(\omega, \Phi) \right) \times 0 = 0
\end{align*}
By taking account $n_e = n/|\cT|$, we have
\begin{align*}
    \bbE[\hat \cL(\Phi(\bx))|\sigma_s, \sigma_v, \sigma_{null}, \sigma_R]  = 
    \begin{cases}
        \sum_{t \in \cT} \cR^t(\omega, \Phi_s) + \sigma_R |\cT|+ \frac{\sigma_{s} }{\sqrt{n}}(2+|\cT| + |\cT|^2)\\
        \sum_{t \in \cT} \cR^t(\omega, \Phi_v)  + \frac{\sigma_{v} }{\sqrt{n}}(2+|\cT| + |\cT|^2)\\
        \sum_{t \in \cT} \cR^t(\omega, \Phi_{null}) + \frac{\sigma_{null} }{\sqrt{n}}(2+|\cT| + |\cT|^2)
    \end{cases}
\end{align*}
 Denote $Q = \left( \sum_{t \in \cT} \cR^t(w, \Phi_v) - \sum_{t \in \cT} \cR^t(w, \Phi_s) \right)$, assume $\delta_R |\cT| \geq Q$.  we have  $\bbE[\hat \cL(\Phi_v(\bx))|\sigma_s, \sigma_v, \sigma_R] > \bbE[\hat \cL(\Phi_s(\bx))|\sigma_s, \sigma_v, \sigma_R]$ if 
\begin{align*}
    (\sigma_v -  \sigma_s) \geq \frac{2\sigma_R\sqrt{n}}{|\cT|} \geq \frac{ \sqrt{n}(\sigma_R |\cT|  +  |Q|)}{|\cT|^2}\geq \frac{\sqrt{n}(\sigma_R |\cT|  +  |Q|)}{(2+|\cT| + |\cT|^2)}. 
\end{align*}
Since $\delta_s, \delta_v$ are independently drawn from a hyper exponential distribution where the density function $P(x; \lambda) = \lambda \exp(-\lambda x)$,  so $P(\delta_v - \delta_s \leq  z) = 1 - 1/2 \exp(-\lambda z)$.  Then if 
$$|\cT| \geq \frac{\sigma_R\sqrt{n}}{\Delta\cG^{-1}(1/4)},$$
{REx is unable to identify the invariant feature with a probability of at least 1/4.}
}

\section{Proof for Lemma \ref{lemma:first_lemma}}
\label{sec:app_proof_lemma1}
\begin{proof}
\begin{align*}
& \argmin_{h \in \cH} \bbE_{(\bx, t)} (h(\Phi(\bx)) - \bt )^2 \\
=& \argmin_{h \in \cH} \bbE_{(\Phi(\bx), t)} (h(\Phi(\bx)) - \bt )^2 \\
= & \argmin_{h \in \cH} \bbE_{\Phi(\bx)} \bbE_{\bt \sim \bbP(\bt|\Phi(\bx))} (h(\Phi(\bx)) - \bt )^2.
\end{align*} 
Because 
\begin{align*}
     & \bbE_{\bt \sim \bbP(\bt|\Phi(\bx))}  (h(\Phi(\bx)) - \bt )^2 \\
     = & h(\Phi(\bx))^2 - 2 h(\Phi(\bx))^\top \bbE [\bt|\Phi(\bx)]  + \bbE [\bt^2|\Phi(\bx)], 
\end{align*}
we then know the minimum is achieved at  $h(\Phi(\bx))=\bbE [\bt|\Phi(\bx)] = h^*(\Phi(\bx)) $ by solving this quadratic problem. The minimum loss achieved by $h^*(\cdot)$ is 
\begin{align*}
     & \bbE_{\Phi(\bx)}\left[\bbE[\bt^2 | \Phi(\bx)] - (\bbE [\bt|\Phi(\bx)])^2\right] = \bbE_{\Phi(\bx)}[\bbV [\bt|\Phi(\bx)]], 
\end{align*}
We can prove the result for $g^*(\cdot)$ in a similar way, and the minimum loss achieved by $g^*(\cdot)$ is $\bbE_{\Phi(\bx), \by}[\bbV [\bt|\Phi(\bx), \by]]$. 
\end{proof}

\section{Proof for Theorem \ref{thm:infinite_samples}}
\label{sec:app_proof_thm_infinite_samples}
\begin{proof}
The penalty term of Eqn~\eqref{eqn:full_constraint_version} is 
\begin{align*}
      & \min_{h \in \cH} \max_{g \in \cG} \bbE_{\bx, \by, \bt}  \left[(h(\Phi(\bx))- \bt)^2_2 - (g(\Phi(\bx), \by)- \bt)^2_2 \right] \\
      = & \bbE_{\Phi(\bx)}[\bbV [\bt|\Phi(\bx)]] - \bbE_{\Phi(\bx), \by}[\bbV [\bt|\Phi(\bx), \by]] \\
      = & \bbE_{\Phi(\bx), \by}[\bbE[\bt|\Phi(\bx), \by]^2] - \bbE_{\Phi(\bx)}[\bbE[\bt|\Phi(\bx)]^2] \\
      = & \bbE_{\Phi(\bx), \by}[\bbE[\bt|\Phi(\bx), \by]^2] - \bbE_{\Phi(\bx)}[ (\bbE_{\by}\bbE[\bt|\Phi(\bx), \by])^2] \\
      \geq & 0.
\end{align*}
The last inequality is due to Jensen's inequality and the convexity of the quadratic function. The inequality is achieved only when $\bbE[\bt|\Phi(\bx)] = \bbE[\bt|\Phi(\bx), \by], \forall \by \in \cY$.
\end{proof}

\section{Proof for Proposition \ref{thm:finite_sample}}
\label{sec:app_proof_thm_finite_sample}
Denote $\theta = [w, \Phi, h]$ and $\hat \theta = [\hat w, \hat \Phi, \hat h]$. We further use $q(\theta, g; \bfz)$ denote the loss of Eqn~\ref{eqn:full_version_empr} on a single sample $\bfz:=[\bfx, \bfy, \bfz]$.
 We start by stating some common assumptions in theoretical analysis as follows:
Our goal is to analyze the performance of the approximate solution $(\hat w, \hat \Phi, \hat h, \hat g)$ on the training dataset with finite samples.
\begin{ass}[\citep{li2021high}]
\label{ass:epsilon_stable}
Let $\bar \cD^{\text{tr}}$ be the dataset generated by replacing one data point in the training dataset $\cD^{\text{tr}}$ with another data point drawn independently from the training distribution. We assume SGDA is $\epsilon$-argument-stable if for any $\cD^{\text{tr}}$ such that
\begin{align}
    \| \theta_{\text{SGDA}}(\bar\cD^{\text{tr}}) - \theta_{\text{SGDA}}(\cD^{\text{tr}}) \| \leq \epsilon, \quad
    \| g_{\text{SGDA}}(\bar \cD^{\text{tr}}) - g_{\text{SGDA}}(\cD^{\text{tr}}) \| \leq \epsilon 
\end{align}
\end{ass}
\begin{ass}
\label{ass:convexity}[\citep{li2021high}]
Denoting $(w, \Phi, h)$ as $\theta$ and $Q(w, \Phi, h, g)$ as $Q(\theta, g)$ for short, $Q(\theta, g)$ is $\mu$-strongly convex in $\theta$, i.e., 
$$Q(\theta_1, g) - Q(\theta_2, g) \geq \nabla_\theta Q(\theta_2, g)^\top (\theta_1 - \theta_2) + \frac{\mu}{2} \|\theta_1 - \theta_2\|_2^2,$$
and 
$\mu$-strongly concave in $g$, i.e., $$Q(\theta_1, g) - Q(\theta_2, g) \leq \nabla_\theta Q(\theta_2, g)^\top (\theta_1 - \theta_2) - \frac{\mu}{2} \|\theta_1 - \theta_2\|_2^2.$$
\end{ass}
The convex-concave assumption for the minimax problem is popular in existing literature \citep{li2021high, lei2021stability, farnia2021train, zhang2021generalization}, simply because non-convex-non-concave minimax problems are extremely hard to analyze due to their non-unique saddle points. When $w$ and $h$ (the classifier for $\by$ and regressor for $\bt$) are linear, it is easy to verify that $Q$ is convex in them. Furthermore, recent theoretical studies show that the overparameterized neural networks (NN) behave like convex systems and training large NN is likely to converge to the global optimum \citep{jacot2018neural, mei2018mean}.

\begin{ass}[Lipschitz continuity \citep{li2021high}]
Let $L > 0$. Assume that for any $\theta$, $g$ and $\bfz$ , we have
\begin{align*}
    \|\nabla_\theta q(\theta, g; \bfz)\| \leq L \quad and \quad \|\nabla_g q(\theta, g; \bfz)\| < L.
\end{align*}
\end{ass}
\begin{ass}[Smoothness \citep{li2021high}] Let $\beta > 0$. Assume that for any $\theta_1, \theta_2$, $g_1, g_2$ and $\bfz$, we have
\begin{align*}
    \left \|
    \begin{pmatrix}
  \nabla_\theta f(\theta_1, g_1; \bfz) -  \nabla_\theta f(\theta_2, g_2; \bfz) \\ 
  \nabla_g f(\theta_1, g_1; \bfz) -  \nabla_g f(\theta_2, g_2; \bfz)
\end{pmatrix} 
\right\| \leq \beta 
\left\| 
\begin{pmatrix}
  \theta_1 - \theta_2 \\
  g_1 - g_2
\end{pmatrix}
\right\|.
\end{align*}
\end{ass}
We define the strong primal-dual empirical risk as follows:
\begin{align*}
    \Delta_{SGDA}^s(\hat \theta, \hat g) = \sup_{g} \hat Q(\hat \theta, g) - \inf_{\theta} \hat Q(\theta, \hat g)
\end{align*}
We first restate Theorem \ref{thm:finite_sample} as follows:
\begin{thm}
\label{thm:finite_sample_restate}
Assume we solve \eqref{eqn:full_version_empr} by SGDA as introduced in Algorithm \ref{algo:cirm} and obtain $(\epsilon_1, \epsilon_2)$ solution $(\hat \theta, \hat h)$. Further, suppose Assumption \ref{ass:convexity} holds and  SGDA is $\epsilon$-argument-stable as described in Assumption \ref{ass:epsilon_stable},  then for any $\delta > 0$, fix $\eta > 0$, we have with probability at least $1-\delta$
\begin{align*}
    Q^*~(\hat \theta) \leq &  (1+\eta)\inf_{\theta} Q^*~(\theta) + C \frac{1+\eta}{\eta} \left( \frac{M}{n} \log \frac{1}{\delta} + \left(\frac{\beta}{\mu} + 1\right)L\epsilon \log_2n\log\frac{1}{\delta} + \epsilon_1 + \epsilon_2\right)
\end{align*}
where $\mu$ is the strongly-convexity, $L$ is the Lipschitz, $\epsilon$ is the stability,  $M$ is   $n$ is the sample size of the training dataset, $\Tilde{O}$ absorbs logarithmic and constant variables which is specified in the appendix.
\end{thm}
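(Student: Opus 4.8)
\textbf{Proof plan for Proposition~\ref{thm:finite_sample} / Theorem~\ref{thm:finite_sample_restate}.}
The plan is to combine an algorithmic stability argument for the minimax problem with a standard conversion from generalization of the strong primal-dual gap to a bound on $Q^*$. First I would set up the decomposition: for the output $\hat\theta = [\hat w, \hat\Phi, \hat h]$ and $\hat g$, write
\begin{align*}
  Q^*(\hat\theta) - \inf_\theta Q^*(\theta)
  &= \big(Q^*(\hat\theta) - \hat Q^s(\hat\theta,\hat g)\big)
   + \big(\hat Q^s(\hat\theta,\hat g) - \inf_\theta \hat Q(\theta,\hat g)\big) \\
  &\quad + \big(\inf_\theta \hat Q(\theta,\hat g) - \inf_\theta Q^*(\theta)\big),
\end{align*}
where $\hat Q^s(\theta,g) := \sup_{g'}\hat Q(\theta,g')$ and the middle term is exactly the strong primal-dual empirical risk $\Delta^s_{\mathrm{SGDA}}(\hat\theta,\hat g)$. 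The middle term is controlled by the $(\epsilon_1,\epsilon_2)$-optimality hypothesis: since $\hat Q(\hat\theta,\hat g)\le \inf_\theta Q(\theta,\hat g)+\epsilon_1$ and $\hat Q(\hat\theta,\hat g)\ge \sup_g Q(\hat\theta,h,g)-\epsilon_2$ — wait, the optimality is stated on the population $Q$, so I would first pass from $\hat Q$ to $Q$ using the same stability/uniform-convergence bound, then conclude $\Delta^s_{\mathrm{SGDA}}\le \epsilon_1+\epsilon_2+(\text{generalization error})$. The third term is nonpositive in expectation up to a generalization gap of the infimum, again handled by the stability bound applied to the ERM of $\hat Q(\cdot,\hat g)$.

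Second, the heart of the argument is bounding the generalization gap of the strong primal-dual objective. Here I would invoke the $\epsilon$-argument-stability assumption (Assumption~\ref{ass:epsilon_stable}) together with the Lipschitz and smoothness assumptions, and cite the high-probability generalization bound for stable SGDA on convex-concave minimax problems from \cite{li2021high} (their Theorem for the weak/strong PD population risk). Roughly, $\epsilon$-argument stability plus $L$-Lipschitzness gives that the population PD gap exceeds the empirical one by at most $O\big((\tfrac{\beta}{\mu}+1)L\epsilon\log_2 n\log\tfrac1\delta\big) + O(\tfrac{M}{n}\log\tfrac1\delta)$ with probability $1-\delta$, where the $M/n$ term comes from a Bernstein-type concentration of the bounded loss and $M$ is the loss range. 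Plugging the known stability rate of SGDA (which itself scales like $O(1/n)$ up to step-size/iteration logarithmic factors under strong convexity-concavity) collapses all the explicit constants into the $\tilde O(1/n)$ notation of the proposition statement.

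Third, I would carry out the $(1+\eta)$ "self-bounding" step. The bound as stated has the multiplicative factor $(1+\eta)$ on $\inf_\theta Q^*(\theta)$ and a $\tfrac{1+\eta}{\eta}$ blow-up on the error terms; this is the standard optimistic-rate trick used when the variance of the loss is controlled by its mean (so that the Bernstein term $\sqrt{\mathrm{Var}/n}$ can be absorbed via $ab\le \tfrac{\eta}{2}a^2+\tfrac{1}{2\eta}b^2$ into $\eta\cdot Q^*$ plus $\tfrac1\eta\cdot$ rate). Concretely, I would bound the variance of $q(\theta,g;\bfz)$ by $M$ times its expectation (nonnegativity of the penalty after the inner $\sup$, plus boundedness), feed that into the Bernstein inequality, and then rearrange to move the $\eta Q^*(\hat\theta)$ contribution to the left-hand side, yielding the claimed form after dividing by $(1-\eta)$ and relabeling $\eta$.

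\textbf{Main obstacle.} The delicate point is the presence of the inner maximization over $g$: one cannot directly apply scalar-valued stability/concentration to $Q^*(\theta)=\sup_g Q(\theta,h,g)$ because the supremum is over an infinite class and is itself a data-dependent quantity when empirical. I expect the hardest step to be justifying that $\epsilon$-argument stability of $(\hat\theta,\hat g)$ transfers to stability of the strong PD gap uniformly, which requires the convex-concave structure (Assumption~\ref{ass:convexity}) to ensure the inner $\sup$ is attained and varies Lipschitz-continuously in $\theta$, and the Lipschitz assumption to convert argument stability into function-value stability — essentially re-deriving or carefully citing the machinery of \cite{li2021high} rather than a black-box empirical-process bound. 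The rest (Bernstein concentration, the $(1+\eta)$ rearrangement, and assembling the three-term decomposition) is routine.
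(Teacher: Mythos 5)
Your proposal follows essentially the same route as the paper's proof: the same decomposition into (i) the generalization gap of the strong primal--dual objective at $(\hat\theta,\hat g)$, (ii) the empirical strong primal--dual gap bounded by $\epsilon_1+\epsilon_2$, and (iii) a concentration term at the population minimizer (your difference-of-infima term reduces to the paper's $A_3+A_4$ once you evaluate at $\theta^*=\argmin_\theta Q^*(\theta)$), with the stability-based high-probability bounds of \cite{li2021high} controlling (i) and (iii) and the AM--GM/self-bounding rearrangement producing the $(1+\eta)$, $\frac{1+\eta}{\eta}$ factors. Your observation that the stated $(\epsilon_1,\epsilon_2)$-optimality mixes $Q$ and $\hat Q$ is a fair catch of a typo; the paper's argument, like yours, uses the empirical version throughout.
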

\begin{proof}
By the suboptimality assumption of $\hat \theta$ and $\hat g$, we have
\begin{align*}
    \Delta_{SGDA}^s(\hat \theta, \hat g) = &  \sup_{g} \hat Q(\hat \theta, g) - \inf_{\theta} \hat Q(\theta, \hat g) \\
    \leq & \hat Q(\hat \theta, \hat g) + \epsilon_2 - (\hat Q(\hat \theta, \hat g) - \epsilon_1) \\
    = & \epsilon_1 + \epsilon_2
\end{align*}

Denote $\theta^* = \argmin_\theta Q^*(\theta)$ and $\hat g^* = \argmax_{g} Q(\hat \theta, g)$. We can decompose 
\begin{align*}
    Q^*(\hat \theta) - \inf_{\theta} Q^*(\theta) = \underbrace{Q^*(\hat \theta) - \hat Q^*(\hat \theta)}_{A_1} + \underbrace{\hat Q^*(\hat \theta) - \hat Q(\theta^*, \hat g)}_{A_2} + \underbrace{\hat Q(\theta^*, \hat g) -  Q(\theta^*, \hat g)}_{A_3} + \underbrace{Q(\theta^*, \hat g) - Q^*(\theta^*)}_{A_4}
\end{align*}
We bound $A_1-A_4$ respectively:
\begin{itemize}
    \item From Eqn~(22) of \cite{li2021high}, we have 
    \begin{align}
        A_1 \leq &  \frac{2M\log(3/\delta)}{3n} + 50 \sqrt{2}\epsilon e L \frac{\beta + \mu}{\mu} \log_2 n log (3e/\delta) + \\
        & \sqrt{\frac{\left(4M Q(\hat \theta, \hat g^*) + 1/2(\beta/\mu+1)^2 L ^2 \epsilon^2 + 32n (\beta/\mu+1)^2L^2\epsilon^2\log(3/\delta)\right)\log(3/\delta)}{n}}
    \end{align}
    \item we have
    \begin{align}
        A_2 = & \sup \hat Q(\hat \theta, g) - \hat Q(\theta^*, \hat g) \\
        \leq & \sup \hat Q(\hat \theta, g) - \inf_\theta \hat Q(\theta, \hat g) \\
        \leq & \epsilon_1 + \epsilon_2
    \end{align}
    \item By Eqn~(10) of \cite{li2021high},
    \begin{align}
        A_3 = \hat Q(\theta^*, \hat g) -  Q(\theta^*, \hat g) \leq &  \frac{2M\log(3/\delta)}{3n} + 50 \sqrt{2} e\epsilon\log_2n \log(3e/\delta) \\
        & +\sqrt{\frac{(4MQ(\theta^*, \hat h) + \epsilon^2/2 + 32n\epsilon^2 \log(3/\delta))\log(3/\delta)}{n}}
    \end{align}
    \item At last
    \begin{align}
        Q(\theta^*, \hat g) - Q^*(\theta^*) = Q(\theta^*, \hat g) - \sup_g Q(\theta^*,  g) \leq 0.
    \end{align}
\end{itemize}
Putting these together with some rearrangement, we finally have  
\begin{align*}
    Q^*~(\hat \theta) \leq &  (1+\eta)\inf_{\theta} Q^*~(\theta) + C \frac{1+\eta}{\eta} \left( \frac{M}{n} \log \frac{1}{\delta} + \left(\frac{\beta}{\mu} + 1\right)L\epsilon \log_2n\log\frac{1}{\delta} + \epsilon_1 + \epsilon_2\right),
\end{align*}
where $C$ is a constant.
 the description of the CIL method in the introduction section is hard to understand.\end{proof}

 \section{Detail Description about Setting of Empirical Verification}
\label{sec:app_detail_empirical_verify}
In this section, we show the spurious relationship  $p_s(\bt)$ varies across the domains as shown in Figure \ref{fig:sp_emprical}.

\begin{figure}[h]
    \centering
    \includegraphics[width=0.5\linewidth]{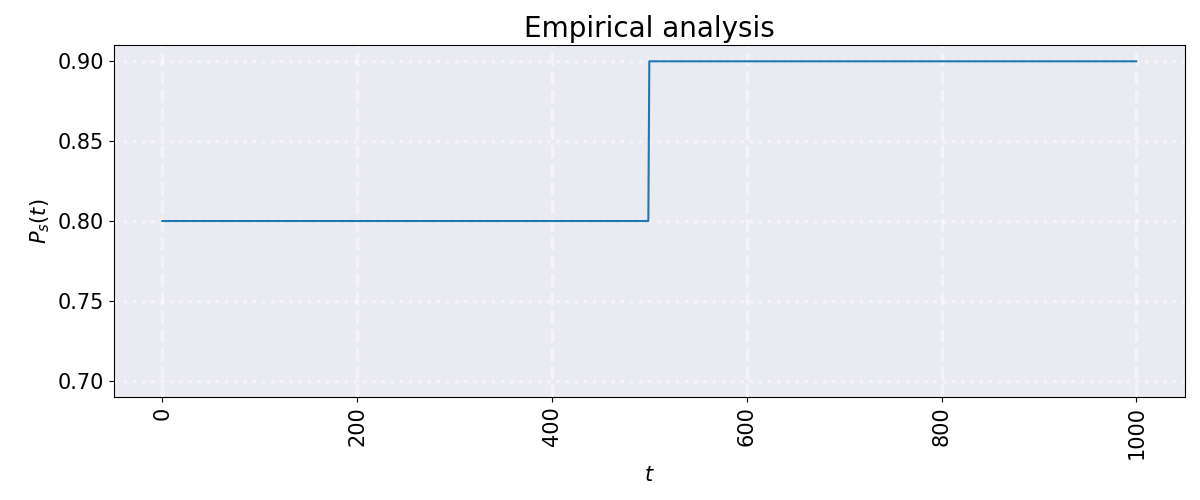}
    \caption{Spurious Relation of Empirical Verification \ref{sect:theory}}
    \label{fig:sp_emprical}
\end{figure}

% \section{Additional Results for the Experiments \ref{sec:logit_dataset}}
\section{Logit Dataset Experiment}

% \subsubsection{Logit Dataset}
\label{sec:app_logit_dataset}
% \begin{figure}

% \begin{subfigure}{.5\textwidth}
%     \centering
%     \includegraphics[width=1\linewidth]{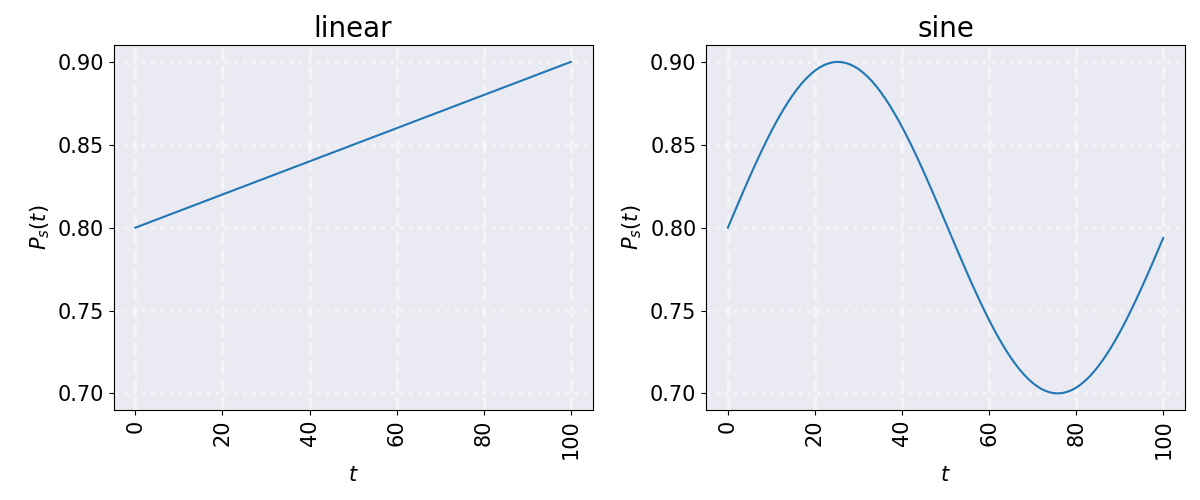}
%     \caption{Spurious correlation of logit dataset}
%     \label{fig:sp_corr_syth}
% \end{subfigure}
% \begin{subfigure}{.45\textwidth}
%     \centering
%     \includegraphics[width=0.8\linewidth]{figures/time_wilds_yearbook.png}
%     \caption{An illustration Yearbook~\citep{yao2022wild}. Images taken from \cite{yao2022wild}. }
%     \label{fig:yearbook}
% \end{subfigure}
% \end{figure}

\begin{figure}[t]
    \centering
    \includegraphics[width=0.8\linewidth]{}
    \caption{Spurious correlation of logit dataset}
    \label{fig:sp_corr_syth}
\end{figure}

\textbf{Setting.} We generate the first synthetic dataset with the invariant feature $\bx_v \in \bbR^2$, spurious feature $\bx_s \in \bbR^{20} $ and target $\by \in \{0, 1\}$. The continuous domains $\bt \in [0, 100]$. The conditional distribution of $\bx_s$ given $\by$ varies along $\bt$ as follows:
\begin{align*}
    \by = & \begin{cases}
      0, \mbox{ w.p. } 0.5,\\
      1, \mbox{ w.p. } 0.5,
    \end{cases}
    \bx_v \sim & \begin{cases}
      \cN(y, \sigma^2), \mbox{ w.p. } p_v, \\
      \cN(-y, \sigma^2), \mbox{ w.p. } 1-p_v,
    \end{cases}
    % \quad 
    \bx_{s} \sim & \begin{cases}
      \cN(y, \sigma^2), \mbox{ w.p. } p_s(\bt), \\
      \cN(-y, \sigma^2), \mbox{ w.p. } 1-p_s(\bt),
    \end{cases}
\end{align*}
where $p_v$ and $p_s(\bt)$ are the probabilities of the feature's agreement with label $\by$. 

Notably, $p_s(\bt)$ varies among domains while $p_v$ stays invariant. The observed feature $\bx$ is a concatenation of $\bx_v$ and $\bx_s$, i.e., $\bx:=[\bx_v, \bx_s]$. 
We generate 2000 samples as the training dataset $\{(\bfx_i, \bfy_i, \bft_i)\}_{i=1}^{2000}$. Our goal is to learn a model $f(\bx)$ to predict $\by$ that solely relies on $\bx_v$. We set $p_v$ 

\begin{table}[t]
    % \scriptsize
    \centering
    \resizebox{0.6\linewidth}{!}{
    \begin{tabular}{c | c |c|c |c }
        \toprule
        \textbf{Env. Type} & \textbf{Split Num} & \textbf{Method} & \textbf{Linear $p_s$} & \textbf{Sine $p_s$}\\
        \midrule
        None & -- &ERM & 25.33(2.01) & 36.18(3.06)\\
        \midrule
        \multirow{4}*{Discrete} 
        & 4 &IRMv1 &  55.39(6.87) & 69.91(2.40)  \\
        % \cmidrule{2-5}
        ~ & 8 & REx & 46.46(7.50) & 68.15(5.51)\\
        % \cmidrule{2-5}
        ~ & 2 & GroupDRO & 46.68(1.80) &  60.95(2.23)\\
        % \cmidrule{2-5}
        ~ & 16 & IIBNet & 51.50(18.60) & 49.68(21.87)\\
        \midrule
        \multirow{4}*{Continuous}
          & -- & IRMv1 & 25.90(2.85) & 41.28(3.66) \\
         ~ & -- & REx  & 42.73(11.47) & 63.33(5.05) \\
         ~ & -- & Diversify & 53.39(2.96) & 53.30(1.93) \\
         % ~ & -- & \textcolor{blue}{EIIL} & \textcolor{blue}{34.43(1.65)} & \textcolor{blue}{46.12(1.51)} \\
         % ~ & -- & \textcolor{blue}{HRM} & \textcolor{blue}{40.55(1.32)} & \textcolor{blue}{51.53(2.25)} \\
        ~ & -- & CIL & \textbf{60.95(6.59)} & \textbf{76.25(4.99)}\\
        \bottomrule
    \end{tabular}
    }
    \caption{Comparison on the synthetic Logit data. The metrics including accuracy and standard deviations in parenthesis are calculated on three independent runs. The environment type ``Discrete'' indicates that we manually create by equally splitting the raw continuous domains. The environment type ``Continuous'' indicates we use the original continuous domain index. ``Split Num'' stands for the number of domains we manually create and we report the best performance among spilt $\{2, 4, 8, 16\}$.}
    % {\color{blue} and report the best results in these settings.} }
    \label{tab:logit_result}
\end{table}

to be 0.9 in all domains. $p_s(\bt)$ varies across the domains as {in} Figure \ref{fig:sp_corr_syth} .

\noindent We can see that $\bx_s$ exhibits a high but unstable correlation  with $\by$. Since existing methods need discrete domains, we equally divide the continuous domain into different numbers of discrete domains, i.e, $\{2, 4, 8, 16\}$.

% And the results for each method are shown in the following table:

\textbf{Results.} Table \ref{tab:logit_result} shows the test accuracy of each method. ERM performs the worst and the large gap implies that ERM models heavily depend on spurious features. IRMv1 improves by 25\% on average compared to ERM. 
CIL outperforms ERM by over 30\% on average. CIL also improves by 5-7\% over all existing methods based on discrete domains on the best manual discrete domain partition, which indicates that CIL can learn invariant features more effectively. The trend of how the performance of IRMv1 and REx changes with the split number is shown in Table \ref{tab:logit_result_detail}
\begin{table}[t]
    \small
    \centering
    \resizebox{0.45\linewidth}{!}
    {
    \begin{tabular}{c | c |c|c |c }
        \toprule
        \textbf{Env. Type} & \textbf{Split Num} & \textbf{Method} & \textbf{Linear $p_s$} & \textbf{Sine $p_s$}\\
        \midrule
        None & -- &ERM & 25.33(2.01) & 36.18(3.06)\\
        \midrule
        \multirow{14}*{Discrete} 
        ~ & 2 &IRMv1 & 53.11(1.92) & 66.16(4.29) \\
         ~ & 4 &  & 55.39(6.87) & 69.91(2.40) \\
         ~ & 5 &  & 39.35(11.70)& 55.98(3.91) \\
        ~ & 8 & & 47.90(2.44) & 59.03(5.32) \\
        ~ & 16 &  & 30.66(4.45) & 56.90(4.98)\\
         ~ & 50 &  & 27.32(0.97)	& 42.73(11.47)\\
          ~ & 100 &  & 25.90(2.85) &	41.28(13.66)\\
        \cmidrule{2-5}
        ~ & 2 & REx & 39.69(4.56) & 59.76(1.22)\\
        ~ & 4 &  & 55.80(12.49) & 64.05(11.29) \\
        ~ & 5 &  & 50.77(11.69) & 65.15(5.77) \\
        ~ & 8 & & 46.46(0.75) & 68.15(5.51) \\
        ~ & 16 &  & 44.05(9.53) & 67.78(6.60)\\
        ~ & 50 &  & 42.73(11.47) &	61.18(2.78)\\
          ~ & 100 &  &41.87(0.48) &	63.33(5.05)\\
        % \cmidrule{2-5}
        % ~ & 2 & GroupDRO & 0.4668(0.0180) &  0.6095(0.0223)\\
        % ~ & 4 &  & 0.4715(0.0086) & 0.6227(0.0111) \\
        % ~ & 8 & & 0.4868(0.0222) & 0.6155(0.0276)\\
        % ~ & 16 &  & 0.5048(0.0233) & 0.6143(0.0260)\\
        % \cmidrule{2-5}
        % ~ & 2 & IIBNet & 0.3268(0.1186) & 0.3143(0.1272)\\
        % ~ & 4 &  & 0.4830(0.0636) & 0.4627(0.1226) \\
        % ~ & 8 & & 0.3965(0.1891) & 0.4252(0.1533) \\
        % ~ & 16 &  & 0.5150(0.1860) & 0.4968(0.2187)\\
        \midrule
        Continuous & -- & CIL & \textbf{60.95(6.59)} & \textbf{76.25(4.99)}\\
        \bottomrule
    \end{tabular}
    }
    \caption{Comparison on the synthetic Logit data. The metric is accuracy and standard deviations in parenthesis are calculated on three independent runs. Split Num stands for the number of domains we manually create by equally splitting the raw continuous domains.  We try settings where the spurious correlation  $p_s(\bt)$ changes by linear and Sine functions, respectively. }
    % {\color{blue} and report the best results in these settings.} }
    \label{tab:logit_result_detail}
    \vspace{-3mm}
\end{table}

\section{Additional Experiment Results for \ref{sec:exp-ccmnist}}
\label{app:add_exp_cmnist}
\textbf{Details on continous CMNIST}. The original CMNIST dataset consists of two domains with varying spurious correlation values: 0.9 in one domain and 0.8 in the other. To simulate a continuous problem, we randomly assign time indices 1-512 to samples in the first domain and indices 513-1024 to the second domain. The spurious correlation only changes at time index 513, as shown in Figure 5 on Page 16 of the Appendix file. In this case, each of the 1024 domains comprises approximately 50 samples. Based on the results in Figure 2 in the main part of the manuscript, REx and IRMv1 display testing accuracies close to random guessing in this scenario. Similar constructions are made for 4, 8, ..., 512 domains.

In this section, we provide the complete experiment results in Table \ref{tab:CMNIST_linear_discrete} and Table \ref{tab:CMNIST_sin_discrete} for IRMv1, REx, GroupDRO, and IIBNet with other numbers of splits (``split num") on the continuous CMNIST dataset. The settings are the same as in Section \ref{sec:exp-ccmnist}, where Table \ref{tab:CMNIST}show the best results for each method with one corresponding ``split num". 
\begin{table}[t]
    \centering
    \resizebox{0.45\linewidth}{!}{
    \begin{tabular}{c|c| c|c|c}
        \toprule
        \textbf{Env. Type} & \textbf{Split Num} & \textbf{Method}  & \textbf{Train}  & \textbf{Test}\\
        \midrule
        \multirow{14}*{Discrete} &  2 &
        IRMv1  & 49.74(0.62) & 50.01(0.34)\\
          ~ &4 & ~  &  51.02(0.86) &	\textbf{49.72(0.86)}\\
        ~ &8 & ~  &  \textbf{51.45(3.53)} &	49.46(0.94)\\\
        ~ & 16 & ~  & 51.31(2.28) &	49.71(0.90)\\
        ~ & 100 & ~  & 50.21(1.96) &	49.35(0.05)\\
        ~ & 500 & ~  & 49.56(0.74) &	47.02(2.72)\\
        ~ & 1000 & ~  & 49.67(0.33) &	48.70(2.65)\\
        \cmidrule{2-5}
         ~ & 2 & REx  &  83.09(0.40) &45.50(1.12) \\
        ~ & 4 & ~  & \textbf{82.75(0.33)} &	47.14(2.22) \\
        ~ & 8 & ~  & 82.05(0.67) &	\textbf{49.31(2.55)} \\
        ~ & 16 & ~  & 82.32(0.45) &	47.86(1.54) \\
        ~ & 100 & ~  & 80.54(0.78) &	49.12(0.20)\\
        ~ & 500 & ~  & 79.21(0.45) &	42.55(1.05)\\
        ~ & 1000 & ~  & 78.98(0.32) &	41.87(0.48)\\
        \bottomrule
    \end{tabular}
    }
    \caption{Accuracy on the continuous CMNIST with $p_s(\bt)$ as a linear function with different split number in (2, 4, 8, 16, 100, 500, 1000).}
    \label{tab:CMNIST_linear_discrete}
\end{table}

\begin{table}[t]
    \centering
    \resizebox{0.45\linewidth}{!}{
    \begin{tabular}{c|c| c|c|c}
        \toprule
        \textbf{Env. Type} & \textbf{Split Num} & \textbf{Method}  & \textbf{Train}  & \textbf{Test}\\
        \midrule
        \multirow{14}*{Discrete} &  2 &
        IRMv1  & 49.47(0.21) & 49.44(0.57)\\
        ~ & 4 & ~  & \textbf{49.77(0.78)} &49.96(0.55) \\
        ~ &8 & ~  &  49.60(0.65)	&49.85(0.26)\\\
        ~ & 16 & ~  &  49.74(0.62) & \textbf{50.01(0.34)} \\
         ~ & 100 & ~  & 49.62(0.58) &	49.83(0.63)\\
        ~ & 500 & ~  & 50.28(0.87) &	46.23(10.11)\\
        ~ & 1000 & ~  & 50.43(1.23) &	49.63(15.06)\\
        \cmidrule{2-5}
         ~ & 2 & REx  & \textbf{82.95(0.47)} & 54.17(1.37) \\
        ~ & 4 & ~  & 81.93(0.91) & \textbf{54.97(1.71)} \\
        ~ & 8 & ~  &  81.59(0.89) & 54.96(2.10)\\
        ~ & 16 & ~  & 81.60(0.69) &	54.07(1.79)  \\
        ~ & 100 & ~  & 80.94(0.78) &	48.66(0.79)\\
        ~ & 500 & ~  & 81.48(0.67) &	43.14(0.97)\\
        ~ & 1000 & ~  & 79.97(0.79) &	42.24(0.74)\\
        \bottomrule
    \end{tabular}
    }
    \caption{Accuracy on the continuous CMNIST with $p_s(\bt)$ as a Sine function with different split number in (2, 4, 8, 16, 100, 500, 1000).}
    % As shown in table, our CIRM is superior a lot to ERM by 43\%, and improves the best setting of manually split methods by 5\%.}
    \label{tab:CMNIST_sin_discrete}
\end{table}

\textbf{Results}. Table \ref{tab:CMNIST_linear_discrete} and \ref{tab:CMNIST_sin_discrete} report the  training and testing accuracy of different domain splitting schemes (2, 4, 8, 16) for both IRMv1 and REx on the continuous CMNIST dataset with two $p_s(\bt)$ settings.  Under linear $p_s(\bt)$, REx performs the best with split number 8 by improving around 4\% over the worst one. However, the results are fairly close under sine $p_s(\bt)$. On the other hand, IRMv1 with all testing accuracy around 0.5 does not seem to be able to extract useful features for the task with either linear $p_s(\bt)$ or sine $p_s(\bt)$.

\section{Additional Results for the Experiments \ref{sec:exp_realworld}}
\label{app:add_exp_realworld}
The result across different split number is shown in Table \ref{tab:realworld_result_detail}
\begin{table}[htbp]
    \centering
    \resizebox{0.5\linewidth}{!}
    {
    \begin{tabular}{c | c |c|c |c }
        \toprule
        \textbf{Env. Type} & \textbf{Split Num} & \textbf{Method} & \textbf{HousePrice} & \textbf{Insurance Fraud}\\
        \midrule
        \multirow{10}*{Discrete} 
        ~ & 2 &IRMv1 & 72.68(0.36)& 71.52(1.35) \\
         ~ & 5 &  & 73.46(1.41)& 68.41(1.13) \\
         ~ & 10 &  & 72.97(1.88)& 67.28(1.64) \\
        ~ & 25 & & 74.42(2.10) & 54.25(1.75) \\
        ~ & 50 &  & 75.40(0.99) & 52.09(2.05)\\
        \cmidrule{2-5}
        ~ & 2 & REx &70.97(0.06) & 74.43(1.32)\\
        ~ & 5 &  & 71.30(1.17) & 74.42(0.87) \\
        ~ & 10 &  & 70.46(0.25) & 73.20(1.65) \\
        ~ & 25 & & 71.01(1.30) & 72.96(0.24) \\
        ~ & 50 &  & 68.82(0.92) &	72.90(0.46)\\
        \midrule
        Continuous & -- & CIL & \textbf{60.95(6.59)} & \textbf{76.25(4.99)}\\
        \bottomrule
    \end{tabular}
    }
    \caption{Comparison on the real world dataset: HousePrice and Insurance. The metric is accuracy and standard deviations in parenthesis are calculated on three independent runs. Split Num stands for the number of domains we manually create by equally splitting the raw continuous domains. }
    % {\color{blue} and report the best results in these settings.} }
    \label{tab:realworld_result_detail}
    \vspace{-3mm}
\end{table}

\section{Ablation Study}
In this section, we tried different penalty set up in \eqref{eqn:full_version} on the auto-scaling dataset to validate the robustness of CIL. The approximating functions $h(\Phi(\bx)), g(\Phi(\bx), \by)$ are implemented by 2-Layer MLPs. We evaluate the performance changes by increasing either the hidden dimension of the MLPs (Table \ref{tab:ablation_mlp}) or penalty weight $\lambda$ (Table \ref{tab:ablation_lambda}).
\begin{table}[htbp]
    \centering
    \resizebox{0.4\linewidth}{!}{
    \begin{tabular}{c|c| c}
        \toprule
        \textbf{Hidden Dimension} & \textbf{ID Accuracy}  & \textbf{OOD Accuracy}\\
        \midrule
        32 &  84.37(5.95) & 64.24(5.47) \\
        \midrule
        64  & 84.17(2.58) &	70.40(1.18)  \\
        \midrule
        128 &  81.25(1.65)& \textbf{71.29(0.04)}\\
        \midrule
        256 &  \textbf{85.57(2.67)} & 68.12(2.25) \\
        \midrule
        512 &  85.28(2.27) & 68.77(1.60) \\
        \bottomrule
    \end{tabular}}
    \caption{ID and OOD accuracy on the auto-scaling dataset across different MLP setups for $h(\Phi(\bx)), g(\Phi(\bx), \by)$ in \eqref{eqn:full_version}. Standard
    deviation in brackets is calculated with 3 independent runs. Other settings are kept the same as in Section \ref{sec:exp_setting}}
    \label{tab:ablation_mlp}
\end{table}

\begin{table}[htbp]
    \centering
    \resizebox{0.35\linewidth}{!}{
    \begin{tabular}{c|c| c}
        \toprule
        \textbf{Penalty Weight} & \textbf{Train}  & \textbf{Test}\\
        \midrule
        100 &  \textbf{86.91(1.31)} & 67.60(1.26)\\
        \midrule
        1000  & 84.47(2.57) &	70.01(1.54)  \\
        \midrule
        10000 & 84.17(2.58) & 70.40(1.18)\\
        \midrule
        100000 &  83.95(2.73)	& \textbf{70.42(1.16)} \\
        \midrule
        1000000 &  83.95(2.73) & 70.35(1.24) \\
        \bottomrule
    \end{tabular}}
    \caption{ID and OOD accuracy on the auto-scaling dataset across different penalty weights $\lambda$ in \eqref{eqn:full_version}. Standard deviation in brackets is calculated with 3 independent runs. Other settings are kept the same as in Section \ref{sec:exp_setting}}
    \label{tab:ablation_lambda}
\end{table}

\textbf{Results}. The ID and OOD accuracy of different MLP setup (hidden dimension) for $h(\Phi(\bx)), g(\Phi(\bx), \by)$ under CIL on the auto-scaling dataset are shown in Table \ref{tab:ablation_mlp}. CIL performs the best when the hidden dimension is 64, and the performance is stable with even higher dimensions. However, the testing accuracy drops to 64\% when the dimension is reduced to 32, as the model is probably too simple to be able to extract enough invariant features. But it is still better than ERM shown in Table \ref{tab:real_world_exp}.

Similarly, Table \ref{tab:ablation_lambda} reports the training and testing accuracy for different penalty weights $\lambda$. They all outperform ERM and the discrete methods in Table \ref{tab:ablation_mlp}.
All accuracy is close to 70\% when the penalty weight is larger than or equal to 1000, which shows the model is robust to different penalty weights. The performance improvement lowers to 67\% when the penalty is reduced to 100, but it still outperforms ERM by 10\%.
The above experiments prove the robustness of our CIL, which can improve the model performance in any setup case.

\section{Settings of Experiments}
\label{sec:exp_setting}
In this section, we provide the training and hyperparameter details for the experiments. All experiments are done on a server base on Alibaba Group Enterprise Linux Server release 7.2 (Paladin) system which has 2 GP100GL [Tesla P100 PCIe 16GB] GPU devices.

\begin{itemize}
    \item \textbf{LR}: learning rate of the classification model $\Phi(\bx))$, e.g. 1e-3.
    \item \textbf{OLR}: learning rate of the penalty model $h(\Phi(\bx)), g(\Phi(\bx), \by)$, e.g. 0.001
    % \item \textbf{Step Size}: step size parameter in the learning rate scheduler function in Torch
    % \item \textbf{Step Gamma}: the gamma parameter in the learning rate scheduler function in Torch, e.g. 0.95
    % \item \textbf{Hidden Dim}: hidden dimension of the classification model $\Phi(\bx)$, e.g. 128
    % \item \textbf{Penalty Dim}: hidden dimension of the penalty model $h(\Phi(\bx)), g(\Phi(\bx), \by)$, e.g. 64
    \item \textbf{Steps}: total number of epochs for the training process, e.g. 1500
    \item \textbf{Penalty Step}: number of epochs when to introduce penalty, e.g. 500
    \item \textbf{Penalty Weight}: the invariance penalty weight, e.g. 1000
\end{itemize}

We show the parameter values used for each dataset in Table \ref{tab:cil_parameters}.

\begin{table}[t]
    \centering
    \resizebox{0.6\linewidth}{!}{
    \begin{tabular}{|c|c|c|c|cc|}
        \toprule
        \textbf{Dataset} & \textbf{LR} & \textbf{OLR} & \textbf{Steps} & \textbf{Penalty Step} & \textbf{Penalty Weight} \\
        \midrule
        Logit (linear)  & 0.001 & 0.001  & 1500 & 500 & 10000 \\
        \midrule
        Logit (sine)  & 0.001 & 0.001 & 1500 & 500 & 10000\\
        \midrule
        CMNIST (linear)  & 0.001 & 0.001 & 1000 & 500 & 8000 \\
        \midrule
        CMNIST (sine)  & 0.001 & 0.001  & 1000 & 500 & 8000\\
        \midrule
        HousePrice & 0.001 & 0.01 & 1000 & 500 & 100000 \\
        \midrule
        Insurance & 0.001 & 0.01 & 1500 & 500 & 10000 \\
        \midrule
        Auto-scaling & 0.001 & 0.01 & 1000 & 500 & 10000 \\
        \midrule
        WildTime-YearBook & 0.00001 & 0.001 & 1000 & 500 & 100 \\
        \bottomrule
    \end{tabular}}
    \caption{The running setup for our CIL on each Dataset}
    \label{tab:cil_parameters}
\end{table}

\section{Experiment on Heart Disease}
{We evaluate our method on the real-world Heart Disease dataset from Kaggle\footnote{\url{https://www.kaggle.com/datasets/amirmahdiabbootalebi/heart-disease}}. This dataset contains records related to the diagnosis of heart disease in patients. Each record consists of features including patient demographics(e.g., age, gender), vital signs(e.g., resting electrocardiogram, resting heart rate, maximum heart) , symptoms(e.g., chest pain), and potential risk factors associated with heart conditions. Our target is to determine the presence or absence of heart disease in the patient. The Cholesterol value is taken as the continuous domains index, where the training dataset contains patients with Cholesterol value between (60.0, 220.0] and the testing dataset between (220.0, 421.0). The training dataset is equally split into discrete domains with 10  in each domain for existing methods dependent on discrete domains. Results in Table \ref{tab:heart_disease} show that all existing methods are inferior to ERM in terms of in-distribution training performance. However, all methods except IIBNet achieve a higher accuracy than ERM on OOD testing.
Our CIL performs the best across all methods, improving by about 2\% compared to other methods.
}

\begin{table}[H]
    \scriptsize
    \centering
    \resizebox{0.4\linewidth}{!}{
    \begin{tabular}{c | c |c|c |c }
        \toprule
        \textbf{Env. Type} & \textbf{Method} & \textbf{ID} & \textbf{OOD}\\
        \midrule
        None & ERM & 88.77(1.25)& 80.58(2.10)\\
        \midrule
        \multirow{2}*{Discrete} 
        % & 4 &IRMv1 & 88.41(1.62)& 83.82(0.46)  \\
        % % \cmidrule{2-5}
        % ~ & 8 & REx & 87.89(1.76) &82.20(0.92)\\
        % \cmidrule{2-5}
        ~ & GroupDRO & 86.98(0.80) & 81.88(0.46)\\
        % \cmidrule{2-5}
        ~  & IIBNet & 81.64(0.55) & 77.67(1.21)\\
        \midrule
        \multirow{4}*{Continuous}
           & IRMv1 & 87.24(2.07) & 83.17(1.65) \\
         ~  & REx  & 87.76(1.87) & 82.85(0.92) \\
         ~  & Diversify & 87.24(1.21) & 82.52(2.86) \\
         ~  & EIIL & 87.36(0.48) & 82.13(1.25) \\
         ~  & HRM & 88.10(0.91) & 81.92(1.72) \\
        ~  & CIL & 86.23(1.25) & \textbf{84.79(0.92)} \\
        % ~  & CIL(re-weight) & \textcolor{blue}{86.91(0.79)} & \textcolor{blue}{84.12(1.36)}\\
        \bottomrule
    \end{tabular}
    }
    \caption{Comparison on the HeartDisease datasets}
    % {\color{blue} and report the best results in these settings.} }
    \label{tab:heart_disease}
\end{table}

\section{On the Correlation of $Y$}
It is possible that the label is not independent of the domains. Taking the Heart Disease dataset as an example, we can visualize the proportion of positive labels among subgroups of patients with different Cholesterol values. The distribution of Y is shown as Figure~\ref{fig:Y-corr}. In this case, where the distribution of Y changes with the domain, we have observed that our method consistently outperforms ERM (Empirical Risk Minimization) and other competitive invariance learning methods. Additionally, we have explored another approach that involves re-weighting the samples to balance the Y ratio within each subgroup of the training data, where the subgroups are defined by Cholesterol intervals of 20.

For instance, let's consider a subgroup with a positive Y ratio of 0.33. We reweight the samples from this subgroup by a factor of 0.5/0.33. After the reweighting process, the Y ratio in each subgroup becomes 0.5. We have found that combining this reweighting technique with our CIL  method achieves slightly better ID performance but slightly worse OOD  performance. Notably, the OOD performance of reweighted CIL is still consistently better than existing methods when we compare Table~\ref{tab:reweight_CIL} with Table~\ref{tab:heart_disease}.}

\begin{figure}[h]
    \centering
    \includegraphics[width=0.5\linewidth]{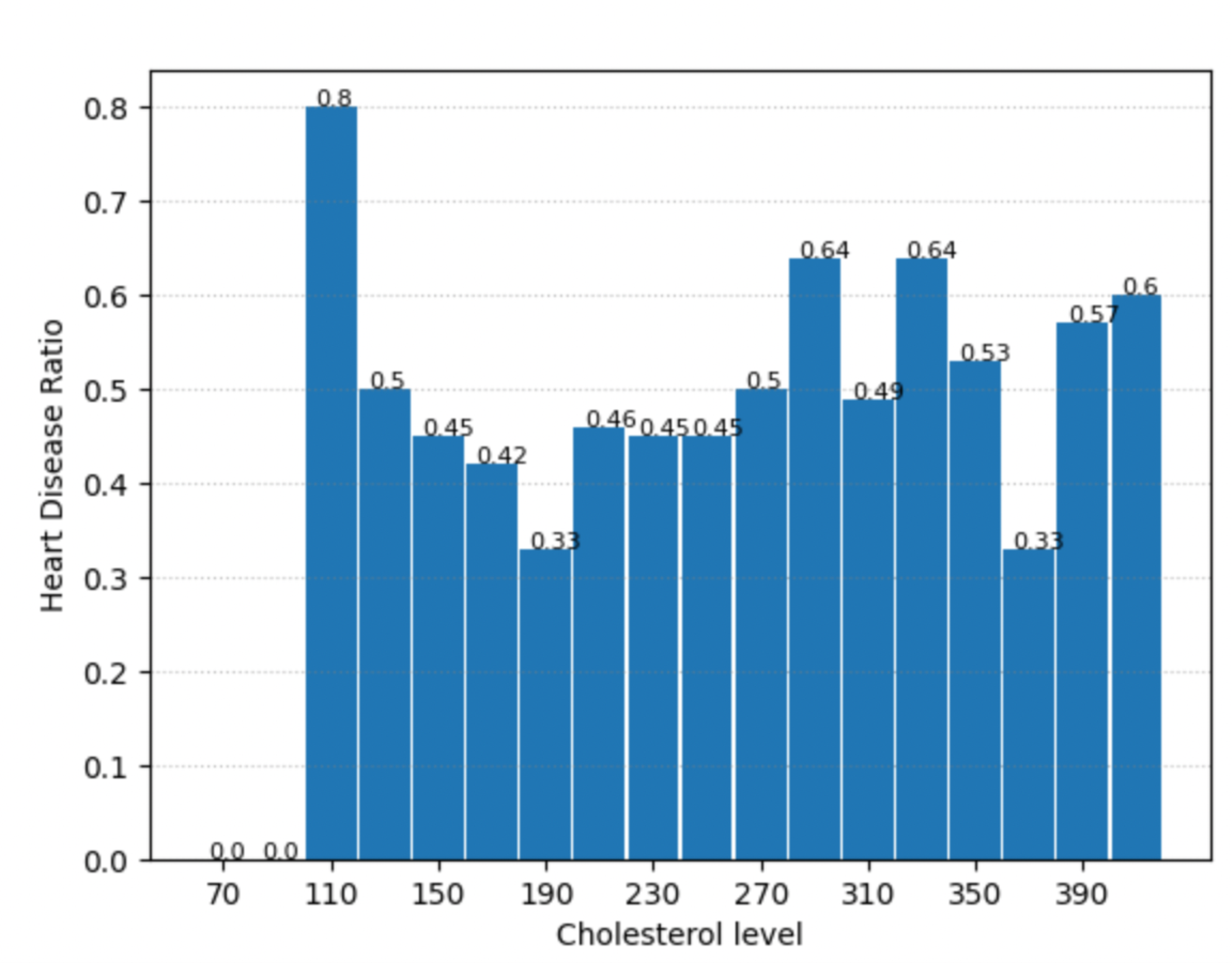}
    \caption{How the Y ratio changes with the Cholesterol value}
    \label{fig:Y-corr}
\end{figure}
\begin{table}[h]
    \centering
    \resizebox{0.35\linewidth}{!}{
    \begin{tabular}{c|c|c}
        \toprule
        Method &  ID & OOD \\
        \midrule
        ERM & 88.77(1.25)& 80.58(2.10)\\
        \midrule
        CIL & 86.23(1.25) & \textbf{84.79(0.92)} \\
        \midrule
         CIL(re-weight) &\textbf{86.91(0.79)} & {84.12(1.36)}\\
         \bottomrule
    \end{tabular}
    }
    \caption{Comparison of re-weighted CIL with vanilla CIL}
    \label{tab:reweight_CIL}
\end{table}

\section{Additional Experiment on Continuous CMNIST}

{The results in Table \ref{tab:CMNIST} in Section 4 shows a large variance Continuous CMNIST. We conjecture that it is due to that fact that invariance learning methods are prone to over-fitting in densely connected DNNs \citep{lin2022bayesian, zhou2022sparse}. To further improve and stabilize the performance of CIL on Continuous CMNIST, we try to incorporate methods in \citep{rosenfeld2022domain,kirichenko2022last}.

We have conducted additional experiments on the Continuous CMNIST dataset. For feature extraction, we utilized a fixed pretrained ResNet18 model on the CMNIST images. In other words, we used the extracted features as inputs for our CIL. Our approach involved training only the linear layer on top of the fixed pre-trained features. This technique has been widely adopted in existing literature, where researchers have found that training just the last layer is sufficient because the pre-trained model already captures enough invariant and spurious features \citep{rosenfeld2022domain,kirichenko2022last}. This method can significantly alleviate the overfitting issue. The results presented in Table \ref{app:add_exp_cmnist_resnet} demonstrate that our CIL significantly benefits from being trained based on the fixed feature extracted by the pre-trained features, as they exhibit exceptional ID and OOD performance with low variance. Notably, the baseline methods are all trained on the feature extracted by a fixed pre-trained model and the other settings are the same with Section \ref{sec:exp-ccmnist}.
}
\begin{table}[h]
    \centering
    \resizebox{0.8\linewidth}{!}{
    \begin{tabular}{c|c|c|c|c|c|c|c}
        \toprule
        \multirow{2}*{\textbf{Env. Type}} & \multirow{2}*{\textbf{Method}} & \multicolumn{3}{|c|}{\textbf{Linear}}  & \multicolumn{3}{c|}{\textbf{Sine}} \\
        \cmidrule{3-8}
        ~&~&\textbf{Split Num} &  \textbf{ID}  & \textbf{OOD}&\textbf{Split Num} &  \textbf{ID}  & \textbf{OOD}\\
        \hline
        None &  ERM & -- & 84.84(0.01) &10.60(0.08) &--& 85.17(0.01) & 10.58(0.18) \\
        \midrule
        \multirow{4}*{Discrete} & IRMv1  &8 & 75.68(0.77) &52.06(1.18) & 2& 76.20(0.15) & 52.35(0.45) \\
        % \cmidrule{3-8}
        ~ & REx  & 4  & 78.42(0.73) & 39.30(4.00) & 4 & 70.19(0.03) & 62.22(0.20)\\
        % \cmidrule{3-8}
        ~ & GroupDRO & 2  & {84.73(0.01)} & 12.04(0.27) & 16 & {85.00(0.01)} & 12.28(0.15)\\
        % \cmidrule{3-8}
        ~ & IIBNet  & 16  &  74.93(0.16) & 41.60(0.63)	 & 8  &  61.30(1.69) &	45.73(1.58)\\
        \midrule
         \multirow{4}*{Continuous}
          & IRMv1 & --  & 77.28(0.11) &46.95(0.48) &-- & 77.37(0.67) &48.02(1.56)\\
         ~ & REx & --  & 78.07(0.40)  &46.95(1.83) & -- & 78.51(0.31)& 46.11(1.69) \\
         ~  & Diversify  & --  & 83.29(0.19) & 30.92(1.10) &-- & 77.03(0.46) & 41.36(1.23) \\
        ~ &CIL & --  & 70.33(0.63)& \textbf{62.15(1.37)} & --  & 72.47(0.43)& \textbf{67.80(0.40)}\\
        \bottomrule  
    \end{tabular}
   }
    \caption{Accuracy on Continuous CMNIST for Linear and Sine $p_s(\bt)$. The standard deviation in brackets is calculated with 5 independent runs. The Env. type ``Discrete'' means that we manually create by equally splitting the raw continuous domains. The environment type ``Continuous'' indicates using the original continuous domain index. ``Split Num'' stands for the number of domains we manually create and we report the best performance among spilt $\{2, 4, 8, 16\}$.} 

    \label{app:add_exp_cmnist_resnet}
\end{table}

\end{document}